\documentclass{article} 
\usepackage{iclr2027_conference,times}

\usepackage{amsmath,amsfonts,bm}

\def\eqref#1{equation~\ref{#1}}

\def\1{\bm{1}}

\DeclareMathAlphabet{\mathsfit}{\encodingdefault}{\sfdefault}{m}{sl}
\SetMathAlphabet{\mathsfit}{bold}{\encodingdefault}{\sfdefault}{bx}{n}

\usepackage{hyperref}
\usepackage{url}

\usepackage{amsmath,amsthm}
\usepackage{enumitem}                                  
\usepackage{algorithm}
\usepackage{algorithmic}
\usepackage{graphicx}
\usepackage{subcaption}
\usepackage{wrapfig}
\usepackage{multirow}
\usepackage{booktabs}

\newtheorem{theorem}{Theorem}
\newtheorem{assumption}{Assumption}

\newcommand{\pengxin}[1]{\textcolor{black}{#1}}

\title{Fed-GRPO: Reward-Signal-Driven Federated Group Relative Policy Optimization}

\author{Pengxin Guo$^{1}$ Shuang Zeng$^{1}$ Zonggen Li$^{1}$ Weiying Zheng$^{1}$ Mengting Liu$^{2}$ Liangqiong Qu$^{1}$\thanks{Corresponding author.}~~
\\
$^{1}$ The University of Hong Kong \quad $^{2}$ Sun Yat-sen University \\
\texttt{\{guopx,zengsh9,zonggen.li,weiyingzheng\}@connect.hku.hk,}\\
\texttt{liumt55@mail.sysu.edu.cn, liangqqu@hku.hk} \\
}

\iclrfinalcopy 
\begin{document}

\maketitle

\begin{abstract}
Large Language Models (LLMs) have shown strong reasoning capabilities when fine-tuned with reinforcement learning (RL), particularly through Group Relative Policy Optimization (GRPO). However, existing GRPO methods assume centralized access to training data, which may not hold in practice due to privacy or regulatory constraints. To this end, we propose Fed-GRPO, a federated GRPO training framework that addresses these privacy constraints by enabling collaborative reasoning training without sharing raw data, which leverages the reward statistics naturally produced during GRPO training as zero-cost signals to guide aggregation, local training, and communication. Fed-GRPO contains three reward-signal-driven mechanisms: (i) \emph{signal-weighted aggregation} that weights clients by their reward standard deviation, prioritizing clients with stronger learning signals; (ii) \emph{global reward calibration} that re-weights per-prompt objectives based on the local-global reward gap, steering each client toward its relative weaknesses; and (iii) \emph{adaptive sparse communication} that allocates bandwidth based on the informativeness of each client's update. Extensive experiments on mathematical reasoning tasks demonstrate that Fed-GRPO achieves the best performance among all federated methods, clearly outperforms FedAvg and approaches centralized training performance, while losslessly reducing communication by $32\times$ and supporting up to $621\times$ compression under tight bandwidth budgets with only graceful accuracy degradation. Our code is available at \href{https://github.com/HKU-HealthAI/Fed-GRPO}{https://github.com/HKU-HealthAI/Fed-GRPO}.
\end{abstract}

\section{Introduction}
\label{sec:intro}

Large Language Models (LLMs) have shown strong reasoning capabilities when fine-tuned with reinforcement learning (RL), in particular through Group Relative Policy Optimization (GRPO)~\cite{shao2024deepseekmath}, which computes advantages from group-relative reward comparisons without a separate critic model. Building on this, DeepSeek-R1~\cite{guo2025deepseek} demonstrates that GRPO with simple rule-based rewards can produce strong reasoning behavior. Since then, a growing line of work has improved various aspects of GRPO-style training, including training stability~\cite{yu2025dapo}, advantage estimation granularity~\cite{liu2025understanding, sun2025ktae, yu2026unveiling}, and multi-reward optimization~\cite{liu2026gdpo}. However, these methods almost universally assume centralized access to training data. This assumption is increasingly unrealistic in high-value reasoning domains, where data is distributed across institutions, platforms, or jurisdictions. For example, hospitals, financial firms, educational platforms, and enterprise coding environments may each possess valuable reasoning data, but cannot pool it due to privacy, ownership, or regulatory constraints. 

Federated Learning (FL)~\cite{mcmahan2017communication,zeng2024tackling,zheng2025fedvlmbench,guo2025exploring}, which enables collaborative model training across institutions without sharing raw data, offers a natural solution. However, federated GRPO is not as simple as running GRPO inside a standard FL loop. The core difficulty is that, in GRPO, a client's usefulness is governed by its \emph{reward signal}. This makes federated GRPO fundamentally different from conventional supervised FL. In conventional supervised FL, client importance is usually determined by sample counts or controlled through parameter-, gradient-, or representation-space drift correction~\cite{mcmahan2017communication,li2020federated,karimireddy2020scaffold,li2021model}. In GRPO, however, a large client may be nearly saturated and provide little learning signal, whereas a smaller client with high reward dispersion may produce much more informative policy gradients.
Thus, the server must answer a different question: \emph{which clients are currently useful for improving reasoning?} 

Notably, a closer examination shows that this reward-dependent notion of usefulness extends far beyond simple client aggregation; it fundamentally alters the entire federated training loop (see Figure~\ref{fig:motivation}(a)). 
First, \textbf{aggregation is reward-misaligned}. 
FedAvg~\cite{mcmahan2017communication} and its follow-up work~\cite{li2020federated,karimireddy2020scaffold,li2021model} weight clients by dataset size or uniformly, at most mitigating drift in parameter-, gradient-, or representation-space.
These methods do not capture the \emph{reward-space signal} in GRPO, making the resulting global update misaligned with the clients that are most useful for RL training. Second, \textbf{local objective is reward-miscalibrated}. 
Each client performs GRPO using only local reward statistics, and therefore lacks a global view of prompt difficulty. 
This can cause clients to under-train globally lagging prompts and over-train prompts already mastered locally. Third, \textbf{communication remains reward-agnostic}. Each federated round requires transmitting billion-parameter models or updates, creating a severe bandwidth bottleneck.
Existing compression methods~\cite{lin2018deep,guastella2025sparsyfed,sarfi2025communication} usually rely on parameter-space heuristics, which may over-compress reward-informative clients while wasting bandwidth on uninformative ones.

\begin{figure} [t]
    \centering
    \includegraphics[width=0.99\textwidth]{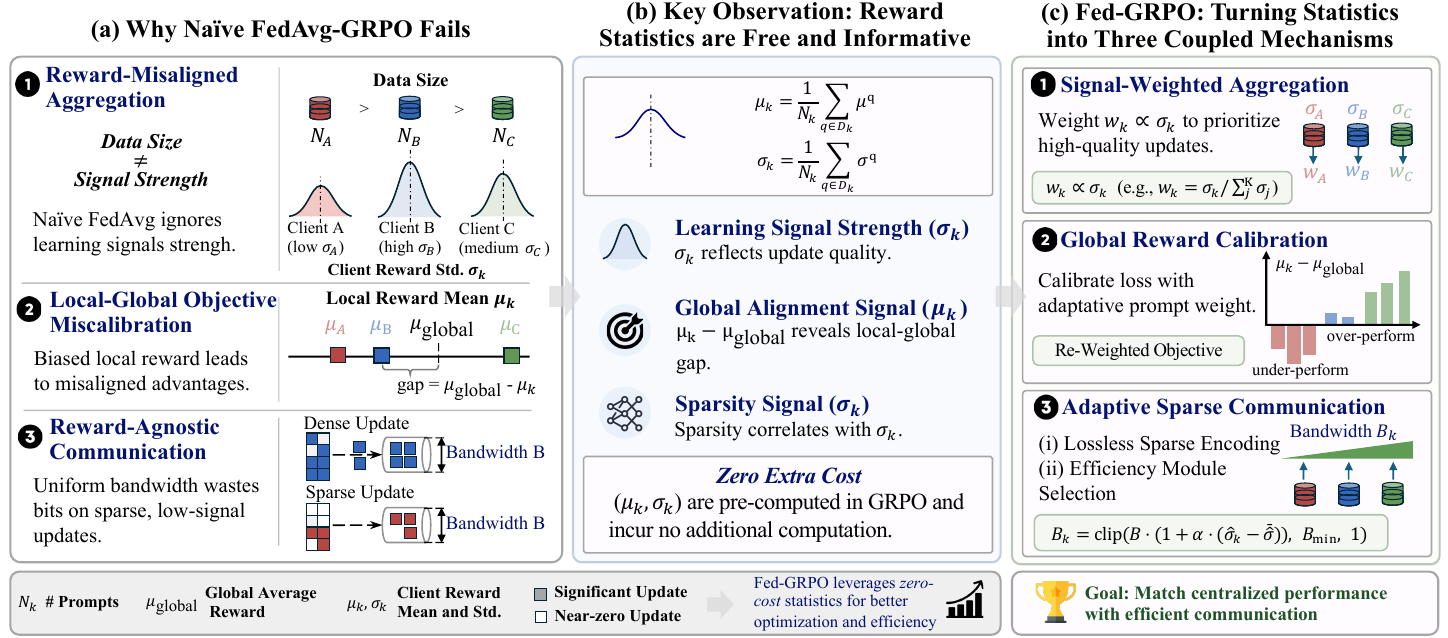}
    \caption{\textbf{Motivation of Fed-GRPO.} Naïve FedAvg-GRPO suffers from three coupled issues: (1) reward-misaligned aggregation, (2) local--global objective miscalibration, and (3) reward-agnostic communication. We observe that the reward statistics $(\mu_k, \sigma_k)$, computed by GRPO at \emph{zero extra cost}, are precisely the signals needed to address all three: $\sigma_k$ guides both signal-weighted aggregation and adaptive bandwidth allocation, while the local--global gap on $\mu_k$ calibrates per-prompt objectives.}
\label{fig:motivation}
\end{figure}

To investigate these challenges, we conduct a diagnostic experiment by training Qwen2.5-3B-Instruct~\cite{qwen2025qwen25} on the MATH dataset~\cite{hendrycks2021measuring} under standard FedAvg-GRPO with a non-IID partition (Section~\ref{sec:motivation}).  We observe three properties that connect directly back to the three challenges above:
(1)~per-client reward standard deviations $\sigma_k$ vary substantially across clients within the same round, suggesting that learning signal strength---not dataset size---should drive aggregation;
(2)~per-client reward means $\mu_k$ diverges significantly from the global average, revealing a persistent local-global gap that biases local advantage estimation and misallocates training effort; and
(3)~GRPO updates are naturally sparse, with over $96.2\%$ of parameters unchanged after local training, and the update density correlates with $\sigma_k$, opening the door to adaptive sparse communication that gives more bandwidth to clients with stronger learning signals.

These observations point to a simple but powerful principle: \emph{the reward statistics computed by GRPO contain precisely the information needed to guide federated aggregation, local objective calibration, and communication allocation}. Importantly, these statistics are available at essentially  \textbf{zero additional training cost}. 
Building on this insight, we propose \textbf{Fed-GRPO}, a federated GRPO training framework that turns these reward statistics into three coupled mechanisms: 
(i)~\textbf{signal-weighted aggregation} weights clients by $\sigma_k$ instead of dataset size, so that clients with stronger learning signals contribute more to the global update;
(ii)~\textbf{global reward calibration} re-weights per-prompt objectives based on the local-global reward gap, steering each client toward prompts where it lags behind and away from those it has already mastered relative to the global average; and 
(iii)~\textbf{adaptive sparse communication} allocates bandwidth based on $\sigma_k$ and compresses updates by exploiting both the natural sparsity of GRPO updates and module-level update efficiency. 
Our motivation is illustrated in Figure \ref{fig:motivation}.
Extensive experiments across several mathematical reasoning benchmarks demonstrate that Fed-GRPO outperforms all federated baselines by a clear margin and closely matches the centralized upper bound.

We summarize our contributions as: i) We identify a fundamental mismatch between standard FL and GRPO, and reveal a powerful insight: the reward statistics ($\mu_k, \sigma_k$) inherently computed by GRPO provide zero-cost, perfect signals for federated coordination. ii) We propose \textbf{Fed-GRPO}, seamlessly integrating \emph{signal-weighted aggregation}, \emph{global reward calibration}, and \emph{adaptive sparse communication} to resolve the aggregation, optimization, and communication bottlenecks in federated GRPO training. iii) We provide a convergence analysis showing that Fed-GRPO converges at rate $\mathcal{O}(1/\sqrt{T})$ to a neighborhood of a stationary point of the calibrated objective, with the neighborhood characterized by two interpretable terms---a \emph{drift floor} from global reward calibration and a \emph{compression floor} from adaptive sparse communication. iv) We empirically validate Fed-GRPO on five math reasoning benchmarks, where it outperforms all federated baselines and approaches centralized GRPO training, losslessly reduces communication by $32\times$, and supports up to $621\times$ compression under tight bandwidth budgets with a graceful accuracy--bandwidth trade-off.





\section{Preliminary}

\subsection{Group Relative Policy Optimization}

Group Relative Policy Optimization (GRPO)~\cite{shao2024deepseekmath} is a critic-free LLM post-training method that computes advantages via group-relative reward comparisons, eliminating the memory-intensive value network required by PPO~\cite{schulman2017proximal}. Given a question $q$, GRPO samples a group of rollouts $\{o_1, o_2, \ldots, o_{G_{\text{rol}}}\}$ from the old policy $\pi_{\theta_\mathrm{old}}$ and optimizes the policy $\pi_\theta$ by maximizing the following objective:
{\footnotesize
\begin{equation}
\begin{aligned}
    \mathcal{J}_{\text{GRPO}}(\theta) = & \ \mathbb{E}_{q \sim \mathcal{D}, \{o_i\}_{i=1}^{G_{\text{rol}}} \sim \pi_{\theta_{old}}(\cdot|q)} \\
    &\left[ \frac{1}{G_{\text{rol}}}\sum_{i=1}^{G_{\text{rol}}} \frac{1}{|o_i|} \sum_{t=1}^{|o_i|} \left( \min\!\left( \rho_{i,t}(\theta) \hat{A}_{i},\; \text{clip}\left(\rho_{i,t}(\theta), 1{-}\epsilon, 1{+}\epsilon\right) \hat{A}_{i} \right) - \beta\, D_{\mathrm{KL}}\!\left(\pi_\theta \| \pi_{\mathrm{ref}}\right) \right) \right],
\end{aligned}
\label{eq:grpo_loss_J}
\end{equation}
}where $\rho_{i,t}(\theta) = \frac{\pi_\theta(o_{i,t}|q,o_{i,<t})}{\pi_{\theta_\mathrm{old}}(o_{i,t}|q,o_{i,<t})}$, $\epsilon$ is the clipping parameter, and $\beta$ controls the KL penalty strength. The advantage $\hat{A}_{i}$ is shared across all tokens in rollout $o_i$ and is computed by normalizing rewards across the group: 
\begin{equation}
    \hat{A}_{i} = \frac{r_i - \mu}{\sigma},
    \label{eq:grpo_advantage}
\end{equation}
where $r_i$ is the reward for rollout $o_i$, and $\mu$ and $\sigma$ are the mean and standard deviation of rewards within the group.

\subsection{Federated Learning}

Federated Learning (FL)~\cite{mcmahan2017communication,qu2022rethinking,yan2023label,pati2024privacy,zhang2024flhetbench} enables collaborative model training across $K$ clients without sharing raw data. In the standard FedAvg~\cite{mcmahan2017communication} framework, each communication round $t$ consists of three steps: (1) the server broadcasts the global model $\theta^{(t)}$ to all clients; (2) each client $k$ performs local training on its private dataset $\mathcal{D}_k$ to obtain updated parameters $\theta_k^{(t)}$ and sends them back to the server; (3) the server aggregates client updates via weighted averaging:
\begin{equation}
    \theta^{(t+1)} = \sum_{k=1}^{K} w_k \, \theta_k^{(t)},
    \label{eq:fedavg}
\end{equation}
where $w_k$ is the aggregation weight for client $k$, typically set to $\frac{|\mathcal{D}_k|}{\sum_{j=1}^{K} |\mathcal{D}_j|}$ or $\frac{1}{K}$ for uniform weighting. This process repeats until convergence or until reaching the maximum number of rounds.

\section{Analysis on Federated GRPO Training}
\label{sec:motivation}

To motivate our design, we conduct a preliminary experiment by training Qwen2.5-3B-Instruct~\cite{qwen2025qwen25} on the MATH dataset~\cite{hendrycks2021measuring} with a non-IID partition across $5$ clients based on problem difficulty (one client per difficulty level). We run standard FedAvg with GRPO as the local training algorithm for 12 rounds, with 5 local steps per round. Throughout training, we track the reward mean $\mu_k$ and standard deviation $\sigma_k$, averaged over all prompts of client $k$, as well as the overall model update sparsity. Figure~\ref{fig:analysis} summarizes the results, revealing three key observations that directly motivate our design.

\begin{figure}[h]
    \centering
        \begin{subfigure}{0.25\textwidth}
        \centering
        \includegraphics[width=\textwidth]{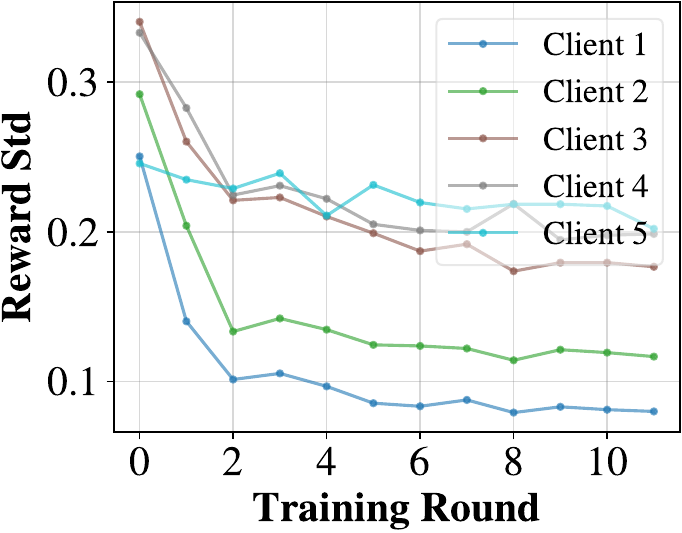}
        \caption{}
        \label{fig:mot_reward_std}
    \end{subfigure}\hfill
    \begin{subfigure}{0.25\textwidth}
        \centering
        \includegraphics[width=\textwidth]{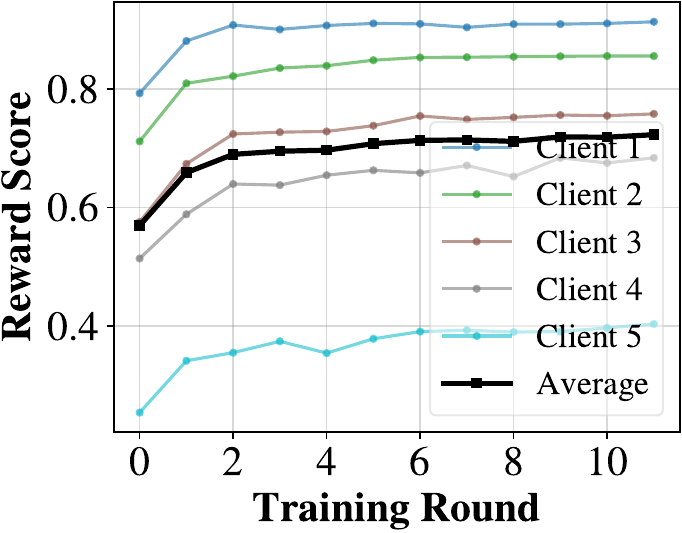}
        \caption{}
        \label{fig:mot_reward_score}
    \end{subfigure}\hfill
    \begin{subfigure}{0.25\textwidth}
        \centering
        \includegraphics[width=\textwidth]{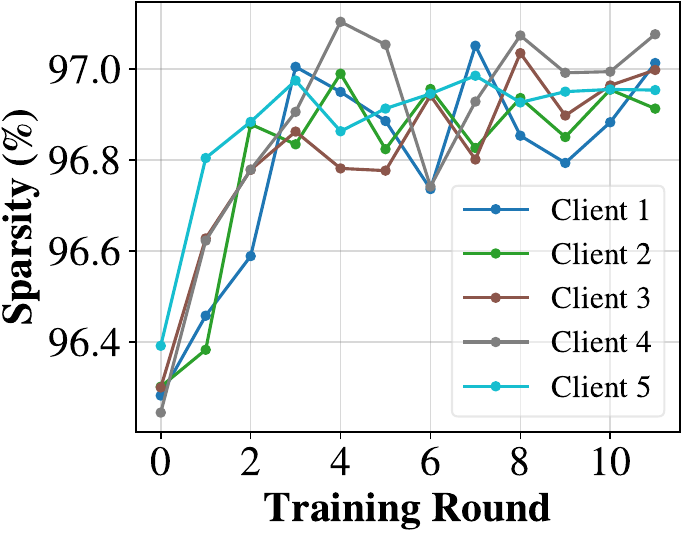}
        \caption{}
        \label{fig:mot_sparsity}
    \end{subfigure}\hfill
    \begin{subfigure}{0.25\textwidth}
        \centering
        \includegraphics[width=\textwidth]{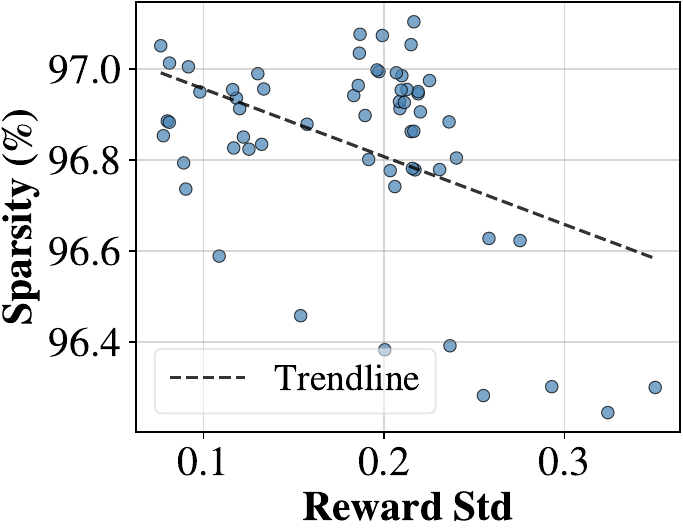}
        \caption{}
        \label{fig:mot_correlation}
    \end{subfigure}
    \caption{FedAvg-GRPO training dynamics on the MATH dataset with $5$ clients split by difficulty. (a)~Per-client reward standard deviation $\sigma_k$ varies by up to $4\times$ across clients within the same round, indicating large differences in learning signal strength. (b)~Per-client reward mean $\mu_k$ diverges significantly across clients, revealing a persistent local-global gap. (c)~Average model update sparsity across clients: over $96.2\%$ of parameters remain unchanged after local GRPO training. (d)~Correlation between $\sigma_k$ and model update sparsity: clients with stronger learning signals produce denser updates.}
    \label{fig:analysis}
\end{figure}

\textbf{Observation 1: Learning signals vary drastically across clients.} Figure~\ref{fig:mot_reward_std} shows that $\sigma_k$ varies by up to $4\times$ across clients within the same round. Since $\sigma_k$ reflects the overall strength of a client's learning signal in GRPO, such variation implies that actual gradient contributions also differ substantially. However, standard FedAvg ignores this and weights all clients equally by dataset size. \emph{This motivates signal-weighted aggregation}, which weights clients by $\sigma_k$ to prioritize those with stronger learning signals.

\textbf{Observation 2: Local reward means diverge from the global average.} Figure~\ref{fig:mot_reward_score} shows persistent gaps between per-client reward means $\mu_k$ and the global average throughout training. A client specializing in easy problems maintains a high $\mu_k$ but contributes little to global improvement, whereas a client on hard problems has a low $\mu_k$ yet holds the most room for improvement. Under standard GRPO, each client only sees its local reward statistics and cannot distinguish ``locally hard but globally easy'' prompts from ``locally easy but globally hard'' ones, resulting in misallocated training effort. \emph{This motivates global reward calibration}, which uses the gap between $\mu_k$ and the global reward mean to re-weight per-prompt objectives so that each client focuses on its relative weaknesses.

\textbf{Observation 3: GRPO updates are naturally sparse and correlate with $\sigma_k$.} Figure~\ref{fig:mot_sparsity} shows that over $96.2\%$ of parameters remain unchanged after local GRPO training, and this high sparsity is remarkably stable across all training rounds.
Moreover, Figure~\ref{fig:mot_correlation} reveals a strong negative correlation between $\sigma_k$ and model update sparsity---clients with stronger learning signals leave fewer parameters unchanged and thus produce denser updates. \emph{This motivates adaptive sparse communication}, which gives more bandwidth to high-$\sigma_k$ clients whose updates carry more information, while compressing low-$\sigma_k$ clients more aggressively, creating a natural alignment between update quality and communication cost.

Taken together, these observations show that the reward statistics $\mu_k$ and $\sigma_k$, computed by GRPO at \emph{no extra cost}, carry rich information about the aggregation, training, and communication needs of each client. In the next section, we describe how Fed-GRPO leverages these statistics to build a unified framework with three key mechanisms that jointly address the three challenges above.

\section{Methodology}
\label{sec:method}

Guided by the observations in Section~\ref{sec:motivation}, we design \textbf{Fed-GRPO}, a framework that leverages the reward statistics naturally produced during GRPO training to improve aggregation, local training, and communication for federated GRPO training. Fed-GRPO contains three key mechanisms: (i)~\textbf{signal-weighted aggregation} weights clients by $\sigma_k$ instead of dataset size, so that clients with stronger learning signals contribute more; (ii)~\textbf{global reward calibration} re-weights per-prompt objectives based on the local-global reward gap, steering each client toward prompts where it lags behind; and (iii)~\textbf{adaptive sparse communication} allocates bandwidth based on $\sigma_k$ and compresses updates by exploiting sparsity and efficiency-based module selection. We detail each mechanism below.

\subsection{Signal-Weighted Aggregation}
\label{sec:signal_weighted}

Standard FedAvg (Eq.~(\ref{eq:fedavg})) weights clients by dataset size (or uniformly), which ignores the \emph{quality} of each client's update during GRPO training. In GRPO training, dataset size is a poor proxy for update quality while the reward standard deviation $\sigma_k$ is a direct measure of the learning signal strength: a client with high $\sigma_k$ produces more informative policy gradients and thus should have a larger influence on the global update. To this end, we propose to weight clients by their reward standard deviation $\sigma_k$, which reflects the diversity of rewards and hence the strength of the learning signal. Specifically, the aggregation weight for client $k$ used in Eq.~(\ref{eq:fedavg}) is replaced by:
\begin{equation}
    w_k = \frac{\sigma_k}{\sum_{j=1}^{K} \sigma_j},
    \label{eq:signal_weighted}
\end{equation}
where $\sigma_k$ is the reward standard deviation from client $k$'s local GRPO training in the current round.

This weighting has two properties: (1) it is \textbf{zero-cost}, since $\sigma_k$ is already computed during GRPO training for advantage normalization; and (2) it is \textbf{adaptive}, meaning the aggregation weights adjust dynamically throughout the training process as each client's $\sigma_k$ naturally evolves (see Figure \ref{fig:swa_dynamics}).

\subsection{Global Reward Calibration}
\label{sec:reward_calibration}

In non-IID federated settings, clients have varying data distributions and varying levels of mastery. Under standard GRPO, each client optimizes its policy based solely on local reward outcomes, lacking any awareness of the global training progress. As a result, a client cannot objectively evaluate its own performance---it does not know whether a locally low reward implies an inherently hard problem or a severe lag behind the global model. This lack of global perspective leads to inefficient local training: clients waste valuable computation over-training on prompts they have already mastered relative to the global average, while under-training on their relative weaknesses.

To address this, we introduce \textbf{global reward calibration}, which uses the global reward statistics to re-weight per-prompt objectives during local training. After each aggregation round, the server computes global reward statistics from the clients' reported metrics and broadcasts them back to all clients:
{\scriptsize
\begin{equation}
    \mu_{\text{global}} = \frac{1}{K}\sum_{k=1}^{K} \mu_k, \qquad
    \sigma_{\text{global}} = \sqrt{\frac{1}{K-1}\sum_{k=1}^{K} (\mu_k - \mu_{\text{global}})^2}.
    \label{eq:global_stats}
\end{equation}
}

Each client then applies a per-prompt weight based on the gap between its local reward mean and the global mean, encouraging it to focus on prompts where it falls behind and reducing effort on prompts it has already mastered relative to others. Formally, for a prompt $q$ with local mean reward $\bar{r}_q$ (averaged over $G_{\text{rol}}$ rollouts), the calibration weight is:
\begin{equation}
    \omega_q = 1 + \lambda \cdot \tanh\!\left(\frac{\mu_{\text{global}} - \bar{r}_q}{\sigma_{\text{global}} + \varepsilon_{\sigma}}\right),
    \label{eq:prompt_weight}
\end{equation}
where $\lambda \in [0, 1]$ controls the calibration strength, and $\varepsilon_{\sigma}$ is a small constant to prevent division by zero. Then, the local GRPO training objective will be replaced by a weighted version that incorporates $\omega_q$:
\begin{equation}
    \mathcal{L}_k = \frac{\sum_q \omega_q \cdot \ell_q}{\sum_q \omega_q},
    \label{eq:calibrated_loss}
\end{equation}
where $k$ denotes the client index, and  $\ell_q := \frac{1}{G_{\text{rol}}}\sum_{i=1}^{G_{\text{rol}}}\frac{1}{|o_i|}\sum_{t}\big[\min(\rho_{i,t}\hat A_i, \mathrm{clip}(\rho_{i,t},1{-}\epsilon,1{+}\epsilon)\hat A_i)-\beta D_{\mathrm{KL}}\big]$ is the standard GRPO objective for prompt $q$.

The intuition behind this design is straightforward: when a client's local performance on a prompt is below the global average, the weight $\omega_q$ increases, amplifying the gradient contribution from that prompt and encouraging the client to focus on its weaknesses; conversely, when a client already exceeds the global average on a prompt, $\omega_q$ decreases, reducing unnecessary optimization on that prompt. The $\tanh$ function ensures that weights are bounded within $[1{-}\lambda, 1{+}\lambda]$, providing a smooth and stable calibration effect while preventing extreme values. In the first round, since global statistics are unavailable, all weights default to 1 and the method reduces to standard GRPO.

\subsection{Adaptive Sparse Communication}
\label{sec:compression}

The communication cost of transmitting full model parameters at each round is a major bottleneck in federated GRPO training, especially as model sizes grow. Fortunately, the natural sparsity of GRPO updates (over $96.2\%$ unchanged parameters, see Figure~\ref{fig:mot_sparsity}) and the correlation between $\sigma_k$ and update density (see Figure~\ref{fig:mot_correlation}) suggest an opportunity for \textbf{adaptive sparse communication} that allocates bandwidth based on the informativeness of each client's update. Specifically, clients with higher $\sigma_k$---whose updates carry more learning signal and receive larger aggregation weights---are given more communication bandwidth, while low-$\sigma_k$ clients (whose updates are mostly zero) are compressed more aggressively. This creates a natural alignment: low-signal clients are both compressed more and weighted less, minimizing the impact of compression-induced information loss.

Formally, given a target average per-client communication budget ratio $B \in (0, 1]$ (defined as the fraction of changed parameters to retain), we determine the adaptive budget ratio $B_k$ for client $k$ based on its normalized signal strength $\hat\sigma_k = \sigma_k / \max_j \sigma_j$ as follows:
\begin{equation}
    B_k = \text{clip}\left( B \cdot \big(1 + \alpha \cdot (\hat\sigma_k - \bar{\hat\sigma})\big), \; B_{\min}, \; 1 \right),
    \label{eq:adaptive_budget}
\end{equation}
where $\bar{\hat\sigma} = \frac{1}{K}\sum_j \hat\sigma_j$ is the mean normalized value across clients, $B_{\min}$ is a minimum budget to ensure that even low-signal clients can transmit some updates, and $\alpha$ controls the adaptation strength.

To meet the target retention ratio $B_k$, we apply a two-stage compression strategy that exploits the structural properties of GRPO updates. First, we apply a \emph{lossless} sparse encoding that extracts only the parameters that changed after local training, leveraging the fact that most parameters remain unchanged during GRPO training (see Figure~\ref{fig:mot_sparsity}). This alone achieves significant compression (approximately $32\times$) without any loss in accuracy, keeping $100\%$ of the updated parameters ($B_k=1.0$). When $B_k < 1.0$, we further apply lossy compression by selecting a subset of modules to transmit based on their \emph{update efficiency}, ensuring that we preserve the most informative updates while dropping parameters until reaching the target fraction $B_k$. The update efficiency of a module is defined as the update energy per transmitted parameter:
\begin{equation}
    E_m = \frac{\|\Delta_m\|_F^2}{\|\Delta_m\|_0},
    \label{eq:efficiency}
\end{equation}
where $\Delta_m$ is the update for module $m$ and $\|\Delta_m\|_0$ is the number of changed parameters in module $m$. We then sort modules by efficiency $E_m$ in increasing order and greedily remove modules until the final transmitted parameter count roughly equals the fraction $B_k$ of the changed parameters. 
This efficiency-based selection ensures that we drop only the least informative updates, maximizing the retained learning signal under tight communication constraints. Importantly, since $B_k$ is adaptive to $\sigma_k$, clients with stronger learning signals are more likely to retain more of their updates, while clients with weaker signals (which contribute less to the global update) are compressed more aggressively, creating a natural synergy between update quality and communication cost.

\subsection{Fed-GRPO Algorithm}

Fed-GRPO integrates the above three mechanisms into a unified training procedure. Specifically, each round begins with the server broadcasting the current global model and global reward statistics to all clients. Each client then performs local GRPO training with global reward calibration, computes its reward statistics, and applies adaptive sparse communication to transmit its compressed update back to the server. The server aggregates the received updates using signal-weighted aggregation and updates the global reward statistics for the next round. This iterative process continues until convergence or until reaching the maximum number of rounds. Algorithm~\ref{alg:fedgrpo} in Appendix~\ref{app_sec:algo} provides a detailed summary.


\section{Convergence Analysis}
\label{sec:convergence}

We make the following assumptions, some of which are commonly used in the FL literature~\cite{Li2020on,guo2025selective,karimireddy2019error,guo2025new}, while Assumption~\ref{assumption:drift} adapts a tool from non-stationary stochastic optimization~\cite{besbes2015non,jadbabaie2015online}.

\begin{assumption} (L-smoothness)
\label{assumption:l-smooth}
Each local objective $\mathcal{L}_k$ is $L$-smooth: for any parameters $\theta, \theta'$,
\begin{equation*}
    \mathcal{L}_k(\theta) \le \mathcal{L}_k(\theta') + \langle \theta - \theta',\, \nabla \mathcal{L}_k(\theta')\rangle_F + \frac{L}{2}\|\theta - \theta'\|^2_F,
\end{equation*}
where $\langle\cdot, \cdot\rangle_F$ and $\|\cdot\|_F$ denote the Frobenius inner product and norm, respectively.
\end{assumption}

\begin{assumption} (Bounded stochastic gradient)
\label{assumption:bounded_gradient}
Let $\xi_{k,s}^{(t)}$ be sampled from the $k$-th client's local data uniformly at random at the $s$-th training step of round $t$. The expected squared norm of stochastic gradients is uniformly bounded: $\mathbb{E}\|\nabla \mathcal{L}_k^{(t)}(\theta_{k,s}^{(t)};\xi_{k,s}^{(t)})\|_F^2 \le G^2$, for all $k=1, \cdots, K$, $t=0, \cdots, T-1$, and $s=1, \cdots, S$.
\end{assumption}

\begin{assumption} (Bounded compression error)
\label{assumption:bounded_compression}
The adaptive sparse communication operator $\mathcal{C}_k(\cdot)$ in Section~\ref{sec:compression} is contractive path-wise: $\|\mathcal{C}_k(\Delta) - \Delta\|_F^2 \le (1-B_k)\|\Delta\|_F^2$ for any $\Delta$, where $B_k$ is the budget ratio assigned to client $k$.
\end{assumption}

\begin{assumption} (Bounded average calibration drift)
\label{assumption:drift}
The average round-to-round drift of the global objective along the iterate path is bounded in expectation:
\begin{equation*}
    \frac{1}{T}\sum_{t=0}^{T-1} \mathbb{E}[\Delta^{(t)}_{\mathcal{L}}] \;\le\; R,
    \qquad
    \Delta^{(t)}_{\mathcal{L}} \;:=\; \bigl|\mathcal{L}^{(t+1)}(\theta^{(t+1)}) - \mathcal{L}^{(t)}(\theta^{(t+1)})\bigr|.
\end{equation*}
Here $\mathcal{L}^{(T)}$ denotes any uniformly bounded extension of the round-statistics (e.g., reusing the round-$(T{-}1)$ weights and calibration); the specific choice does not affect the bound.
\end{assumption}

Then, we present the convergence rate for Fed-GRPO, with the proof provided in Appendix~\ref{app:proof_convergence}.

\begin{theorem}
\label{thm:convergence}
Let Assumptions \ref{assumption:l-smooth}--\ref{assumption:drift} hold and $L$, $G$, $R$ be defined therein. For a learning rate $\eta$, we have:
\begin{equation}
        \frac{1}{T}\sum_{t=0}^{T-1} \mathbb{E}\bigl\|\nabla \mathcal{L}^{(t)}(\theta^{(t)})\bigr\|_F^2 \;\le\; {2\sqrt{\frac{D\,M}{T}}} \;+\; 2\sqrt{R\,M} \;+\; {4\sqrt{K(1-\bar{B})}\,G^2},
        \label{eq:thm1_main}
\end{equation}
\emph{where $\mathcal{L}^{(t)}(\theta) = \sum_{k=1}^K w_k^{(t)} \mathcal{L}_k^{(t)}(\theta)$ is the global objective at round $t$, $D := \mathcal{L}^{(0)}(\theta^{(0)}) - \inf_{0\le t\le T,\,\theta}\mathcal{L}^{(t)}(\theta)$, $\bar{B} = \tfrac{1}{KT}\sum_{t=0}^{T-1}\sum_{k=1}^K B_k^{(t)}$ is the per-client budget ratio averaged over both clients and rounds (which equals the user-specified target $B$ in Eq.~(\ref{eq:adaptive_budget}) when clipping is inactive), and $M$ is a constant depending on $L, G, S, K$.}
\end{theorem}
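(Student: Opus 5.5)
The plan is the standard one-step descent argument for local SGD, carried out on the time-varying objective $\mathcal{L}^{(t)}=\sum_k w_k^{(t)}\mathcal{L}_k^{(t)}$. I treat Fed-GRPO as descent on $\mathcal{L}^{(t)}$ (i.e.\ minimizing the negated GRPO objective). Write the round update as $\theta^{(t+1)}=\theta^{(t)}-\sum_k w_k^{(t)}\mathcal{C}_k(\Delta_k^{(t)})$, where $\Delta_k^{(t)}=\eta\sum_{s=1}^S g_{k,s}^{(t)}$ is client $k$'s accumulated local stochastic gradient and $\mathcal{C}_k$ is the compressor. I split this into an ideal step $\bar\Delta^{(t)}=\sum_k w_k^{(t)}\Delta_k^{(t)}$ and a compression error $e^{(t)}=\sum_k w_k^{(t)}(\mathcal{C}_k(\Delta_k^{(t)})-\Delta_k^{(t)})$.

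\textbf{Step 1: descent on a fixed objective.} Since each $\mathcal{L}_k^{(t)}$ is $L$-smooth (Assumption~\ref{assumption:l-smooth}), the convex combination $\mathcal{L}^{(t)}$ is also $L$-smooth. This gives
\[
\mathcal{L}^{(t)}(\theta^{(t+1)})\le \mathcal{L}^{(t)}(\theta^{(t)})-\langle\nabla\mathcal{L}^{(t)}(\theta^{(t)}),\bar\Delta^{(t)}+e^{(t)}\rangle_F+\tfrac{L}{2}\|\bar\Delta^{(t)}+e^{(t)}\|_F^2 .
\]
\begin{itemize}
\item \emph{Main inner product.} I write $\bar\Delta^{(t)}=\eta S\nabla\mathcal{L}^{(t)}(\theta^{(t)})+\eta\sum_{k,s}w_k(\nabla\mathcal{L}_k(\theta_{k,s})-\nabla\mathcal{L}_k(\theta^{(t)}))$ plus zero-mean noise. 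The client-drift part is bounded via smoothness and $\|\theta_{k,s}-\theta^{(t)}\|\le \eta sG$ in expectation (Assumption~\ref{assumption:bounded_gradient}). This yields $-\eta S\,\mathbb{E}\|\nabla\mathcal{L}^{(t)}\|^2+O(\eta^2S^2LG^2)$ after Young's inequality; equivalently one can use the cruder bound $\|\nabla\mathcal L\|\le G$ throughout.
\item \emph{Quadratic term.} This is bounded by $O(L\eta^2S^2G^2)$ using Jensen over the weights and over $s$, together with $\|e^{(t)}\|\le\|\bar{\Delta}\|$-type bounds from Assumption~\ref{assumption:bounded_compression}.
\end{itemize}

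\textbf{Step 2: compression term.} I use Cauchy--Schwarz, $|\langle\nabla\mathcal{L}^{(t)},e^{(t)}\rangle|\le G\|e^{(t)}\|$. With Assumption~\ref{assumption:bounded_compression} and $w_k\le1$,
\[
\|e^{(t)}\|\le\sum_k\sqrt{1-B_k^{(t)}}\,\eta SG\le \eta SG\sqrt{K\textstyle\sum_k(1-B_k^{(t)})}.
\]
After averaging over $t$ (Jensen on the concave square root), this becomes $\eta S G^2 K\sqrt{1-\bar B}$. I must track constants so that, after dividing by $\eta S$, it lands as $c\sqrt{K(1-\bar B)}G^2$. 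The $K$-versus-$\sqrt K$ bookkeeping depends on whether I apply Cauchy--Schwarz with the weights $w_k$ (giving $\sqrt{\sum_k w_k^2(1-B_k)}\le\sqrt{K}\cdot$avg); I would choose the weighted version to match the stated $\sqrt{K(1-\bar B)}$.

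\textbf{Step 3: objective drift and telescoping.} I add and subtract $\mathcal{L}^{(t+1)}(\theta^{(t+1)})$, so that $\mathcal{L}^{(t)}(\theta^{(t+1)})\ge\mathcal{L}^{(t+1)}(\theta^{(t+1)})-\Delta_{\mathcal L}^{(t)}$. Summing over $t=0,\dots,T-1$ and using $D$ together with Assumption~\ref{assumption:drift} gives
\[
\eta S\sum_t\mathbb{E}\|\nabla\mathcal{L}^{(t)}\|^2\le D+TR+T\eta^2 M'+T\eta S\cdot 4\sqrt{K(1-\bar B)}G^2,
\]
where $M'=O(LS^2G^2)$. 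Dividing by $T\eta S$, the average squared gradient is at most $\frac{D}{T\eta S}+\frac{R}{\eta S}+\eta M+4\sqrt{K(1-\bar B)}G^2$, with $M=M'/S$. Minimizing over $\eta$ via $a/\eta+b\eta\ge 2\sqrt{ab}$ with $a=(D/T+R)/S$, and using $\sqrt{x+y}\le\sqrt x+\sqrt y$, gives $2\sqrt{DM/T}+2\sqrt{RM}$ after absorbing $S$ into $M$. This matches Eq.~(\ref{eq:thm1_main}).

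\textbf{Main obstacle.} I expect the compression cross-term to be the hardest part. It cannot be made to vanish with $\eta$ because it scales like $\eta S$, the same order as the descent term. Hence it produces an irreducible floor, and getting exactly the $\sqrt{K(1-\bar B)}$ form with constant $4$ requires care with the weighted Cauchy--Schwarz and the Jensen step over rounds. A second subtlety is that $w_k^{(t)}$ and $\omega_q$ depend on samples within round $t$, so the zero-mean-noise step is not exact. I would sidestep this by bounding the inner product crudely with $G$ (as in Step 2), folding any bias into $M$.
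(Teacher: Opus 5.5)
\textbf{Gap 1: correlation between the weights and the gradient noise.} Your overall skeleton matches the paper's proof: smoothness descent on $\mathcal{L}^{(t)}$, Young's inequality on the client drift, bounding the compression cross-term via $\|\nabla\mathcal{L}^{(t)}\|_F\le G$, the link $\mathcal{L}^{(t)}(\theta^{(t+1)})\ge\mathcal{L}^{(t+1)}(\theta^{(t+1)})-\Delta^{(t)}_{\mathcal{L}}$, telescoping, and AM--GM in $\eta$. The paper only splits the smoothness step into $\theta^{(t)}\to U^{(t)}\to\theta^{(t+1)}$, with $U^{(t)}$ the uncompressed aggregate.

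The step that fails is your treatment of the fact that $w_k^{(t)}=\sigma_k^{(t)}/\sum_j\sigma_j^{(t)}$ is computed from the same round's rollouts. The term $\eta\sum_{k,s}\mathbb{E}\langle\nabla\mathcal{L}^{(t)}(\theta^{(t)}),\,w_k^{(t)}(g_k^{(t,s)}-\nabla\mathcal{L}_k^{(t)}(\theta_k^{(t,s)}))\rangle_F$ then does not vanish. Bounding it crudely through $G$ only gives $O(\eta S G^2)$, which is first order in $\eta$, the same order as the descent term. So it cannot be folded into $M$, which multiplies $\eta$ in the final bound. After dividing by the descent coefficient it leaves an $\eta$-independent floor that is a constant multiple of $G^2$, no better than the trivial bound $\|\nabla\mathcal{L}^{(t)}\|_F^2\le G^2$.

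Nothing in Assumptions~\ref{assumption:l-smooth}--\ref{assumption:drift} controls this correlation, so an extra hypothesis is unavoidable. The paper adds one inside the proof: $\|\operatorname{Cov}_t(w_k^{(t)},g_k^{(t,s)})\|_F\le\delta_t\le\mathbb{E}\|\nabla\mathcal{L}^{(t)}(\theta^{(t)})\|_F/(4K)$. Then the covariance contribution $\eta KS\delta_t\,\mathbb{E}\|\nabla\mathcal{L}^{(t)}(\theta^{(t)})\|_F$ is absorbed by half of the descent term $-\tfrac{\eta S}{2}\mathbb{E}\|\nabla\mathcal{L}^{(t)}(\theta^{(t)})\|_F^2$. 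The surviving $\tfrac{\eta S}{4}$ is what produces the constant $4$ in the compression floor; in your write-up that $4$ is inserted by hand.

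\textbf{Gap 2: the compression bookkeeping.} Your proposed weighted Cauchy--Schwarz, $\sum_k w_k\sqrt{1-B_k}\,\|\Delta_k\|_F\le\big(\sum_k w_k^2(1-B_k)\big)^{1/2}\big(\sum_k\|\Delta_k\|_F^2\big)^{1/2}$, still picks up $\sqrt{K}\,\eta SG$ from the second factor. You are therefore back at $K\sqrt{1-\bar B}$.

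What yields $\sqrt{K(1-\bar B)}$ is Jensen over the convex weights applied to the \emph{squared} error: $\|\sum_k w_k(\tilde\Delta_k-\Delta_k)\|_F^2\le\sum_k w_k\|\tilde\Delta_k-\Delta_k\|_F^2\le\sum_k(1-B_k)\|\Delta_k\|_F^2$, using $w_k\le 1$. Only one sum over $k$ appears, and you then finish with Young's inequality, or equivalently $\mathbb{E}\|e^{(t)}\|_F\le(\mathbb{E}\|e^{(t)}\|_F^2)^{1/2}$.

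Two further points break your Step~2 as written:
\begin{itemize}
\item Your pathwise bound $\|\Delta_k\|_F\le\eta SG$ is not available, since Assumption~\ref{assumption:bounded_gradient} holds only in mean square.
\item Because $B_k^{(t)}$ is computed from the current $\sigma_k^{(t)}$, $\mathbb{E}[(1-B_k^{(t)})\|\Delta_k^{(t)}\|_F^2]$ does not factor into $(1-\mathbb{E}B_k^{(t)})\,\eta^2S^2G^2$. The paper again needs an additional covariance condition between $B_k^{(t)}$ and $\|\Delta_k^{(t)}\|_F^2$ to decouple them.
\end{itemize}
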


According to Theorem \ref{thm:convergence}, Fed-GRPO converges to a neighborhood of a stationary point of the time-varying calibrated objective at rate $\mathcal{O}(1/\sqrt{T})$, where the neighborhood size is determined by two irreducible terms: a \emph{drift floor} of size $\mathcal{O}(\sqrt{R M})$ from the round-to-round change of the calibrated objective, and a \emph{compression floor} of size $\mathcal{O}(\sqrt{K(1-\bar{B})}\, G^2)$ from adaptive sparse communication. 
When the time-varying global objective is stationary across rounds ($R=0$) and the encoding is lossless ($\bar{B} = 1$), both floors vanish and Fed-GRPO recovers the standard $\mathcal{O}(1/\sqrt{T})$ rate of FedAvg in the non-convex setting~\cite{Li2020on}. 
\section{Experiments}
\label{sec:experiments}

\subsection{Experimental Setup}
\label{sec:exp_details}

\pengxin{For mathematical reasoning tasks tasks,}
we train on the MATH dataset~\cite{hendrycks2021measuring}, split across $K{=}5$ clients by difficulty to form a naturally non-IID partition, and evaluate on five math reasoning benchmarks: GSM8K~\cite{cobbe2021training}, MATH-500~\cite{hendrycks2021measuring,lightman2023let}, AIME 2024, AIME 2025, and AMC 2023. The base models are Qwen2.5-3B-Instruct~\cite{qwen2025qwen25} \pengxin{and Qwen3-4B-Instruct~\cite{yang2025qwen3}}, optimized with AdamW~\cite{loshchilov2018decoupled} (lr $3{\times}10^{-6}$) under GRPO with a binary correctness reward, prompt batch size $64$, and $8$ rollouts per prompt. 
\pengxin{For coding tasks, we use Qwen2.5-3B-Instruct as the base model, train on the MBPP dataset~\cite{austin2021program} ("train" + "validation") and evaluate on the held-out MBPP "test" split and the HumanEval dataset~\cite{chen2021evaluating}. To construct a non-IID federation setting, we partition the MBPP dataset by the reference-solution length, bucket it into 5 quantile bins, and partition the data across clients with a Dirichlet distribution ($\alpha=0.5$) over these buckets.}
All federated methods run for $T{=}12$ communication rounds with $S{=}5$ local steps; centralized and local-only baselines are trained for an equivalent number of gradient updates. We compare Fed-GRPO against the base model, centralized GRPO (upper bound), local-only GRPO, sample- and uniform-weighted FedAvg~\cite{mcmahan2017communication} (FedAvg-SW / FedAvg-UW), two sparse-communication FL methods, SparsyFed~\cite{guastella2025sparsyfed} and SparseLoCo~\cite{sarfi2025communication}, \pengxin{and one Federated GRPO method, FGRPO~\cite{chen2026fgrpo}}. Following previous GRPO literature~\cite{yu2025dapo, guo2025deepseek}, we evaluate the best checkpoint of each method and report avg@$32$ (32 samples per prompt at temperature $1.0$, top-$p{=}0.7$). More details are deferred to Appendix~\ref{app_sec:full_exp_details}.

\subsection{Main Results}

\begin{table}[h]
\caption{Main results on five math reasoning benchmarks \pengxin{using Qwen2.5-3B-Instruct as the base model}. FedAvg-SW and FedAvg-UW denote sample-weighted and uniform-weighted FedAvg, respectively. Budget $B$ is the target fraction of changed parameters retained for transmission; Comm.\ (MB) is the average per-round per-client uplink payload (BF16 values plus 4-byte indices for sparse encodings); Comp.\ ($\times$) reports the compression ratio relative to dense full-model transmission as \emph{parameter ratio / byte ratio}. Best result for federated methods in \textbf{bold}. $^{\dagger}$For SparsyFed and SparseLoCo, Budget denotes the kept fraction of \emph{all} parameters.}
\label{tab:main_results}
\vspace{-0.3cm}
\begin{center}
\resizebox{\linewidth}{!}{
\begin{tabular}{lccc|ccccc|c}
\toprule
\textbf{Method} & \textbf{Budget} & \textbf{Comm.\,(MB)} & \textbf{Comp.\,($\times$)} & \textbf{GSM8K} & \textbf{MATH-500} & \textbf{AIME24} & \textbf{AIME25} & \textbf{AMC} & \textbf{Avg.} \\
\midrule
Qwen2.5-3B-Instruct & - & - & - & 59.47 & 46.54 & 2.50 & 1.46 & 27.11 & 27.42 \\
\midrule
Centralized & - & - & - & 84.75 & 63.42 & 7.50 & 3.44 & 41.95 & 40.21 \\
\midrule
Local-Only & - & - & - & 82.45 & 58.16 & 5.88 & 2.94 & 38.28 & 37.54 \\
FedAvg-SW & - & 6{,}794.2 & $1\times\,/\,1\times$ & 79.55 & 56.84 & 6.56 & 3.02 & 39.53 & 37.10 \\
FedAvg-UW & - & 6{,}794.2 & $1\times\,/\,1\times$ & 80.00 & 55.73 & 5.42 & \textbf{3.12} & 37.66 & 36.39 \\
SparsyFed & $5\%^{\dagger}$  & 1{,}019.1 & $20\times\,/\,7\times$  & 79.10 & 56.61 & 6.56 & 2.92 & 39.53 & 36.94 \\
SparseLoCo & $3\%^{\dagger}$ & 451.2     & $32\times\,/\,15\times$ & 78.69 & 56.05 & 6.04 & 2.81 & 39.30 & 36.58 \\
FGRPO & - & 6{,}794.2 & $1\times\,/\,1\times$ & 83.11 & 59.79 & 6.83 & 3.02 & 39.73 & 38.49 \\
\cmidrule{1-10}
\multirow{5}{*}{Fed-GRPO (Ours)} & $100\%$ & 645.5 & $32\times\,/\,11\times$ & \textbf{83.21} & 60.82 & 7.08 & \textbf{3.12} & \textbf{41.09} & \textbf{39.06} \\
& $50\%$  & 321.4 & $63\times\,/\,21\times$   & 83.11 & \textbf{60.84} & \textbf{7.19} & 2.71 & 41.25 & 39.02 \\
& $20\%$  & 126.9 & $161\times\,/\,54\times$  & 81.16 & 56.14 & 5.31 & 2.81 & 38.67 & 36.82 \\
& $10\%$  & 62.2  & $327\times\,/\,109\times$ & 79.69 & 55.21 & 5.42 & 3.12 & 36.25 & 35.94 \\
& $5\%$   & 32.8  & $621\times\,/\,207\times$ & 77.29 & 54.83 & 5.62 & 3.23 & 34.45 & 35.08 \\
\bottomrule
\end{tabular}
}
\end{center}
\end{table}

Table~\ref{tab:main_results} reports the main results achieved by each method across the five benchmarks. From this table, we can draw several key conclusions. First, Fed-GRPO achieves the best performance among all federated methods, and approaches the centralized upper bound, demonstrating that it effectively mitigates the challenges of non-IID data and aggregation mismatch in federated GRPO training. Second, the fact that FedAvg-SW and FedAvg-UW perform worse than Local-Only highlights the pitfalls of naive aggregation strategies that ignore update quality, underscoring the importance of our signal-weighted aggregation mechanism. Third, Fed-GRPO offers a flexible communication--accuracy trade-off across a wide range of bandwidth budgets. At $B{=}100\%$, our adaptive sparse communication is mathematically lossless and already shrinks the per-round uplink from $6{,}794.2$\,MB to $645.5$\,MB---a $32\times$ parameter ($11\times$ byte) compression with \emph{zero} accuracy loss. Halving the budget to $B{=}50\%$ doubles the compression to $63\times$ ($21\times$ in bytes) while preserving essentially the same average accuracy ($39.02$ vs.\ $39.06$), and even at $B{=}20\%$ ($161\times$, $54\times$ in bytes) Fed-GRPO still matches FedAvg's accuracy at a small fraction of the bandwidth. Pushing further to $B{=}10\%$ and $B{=}5\%$ delivers extreme $327\times$ and $621\times$ parameter compression ($109\times$ and $207\times$ in bytes; $32.8$\,MB per round at $B{=}5\%$), at the cost of a moderate accuracy drop of $\sim$3--4 points relative to lossless transmission. We emphasize that this $621\times$ regime is intended for severely bandwidth-constrained deployments rather than as the default operating point: as the lower rows of Table~\ref{tab:main_results} show, the degradation is gradual rather than abrupt, allowing practitioners to dial $B$ along a smooth Pareto frontier and select an operating point---e.g., $B{=}50\%$ as a near-lossless sweet spot, or $B{=}5\%$ when bandwidth dominates---according to their bandwidth--accuracy preferences.

{\color{black}
\subsection{More Results on Other Base Models and Tasks}

Tables~\ref{tab:main_results_qwen3_4b} and \ref{tab:main_results_coding} report additional results on math reasoning benchmarks using Qwen3-4B-Instruct as the base model and on coding tasks using Qwen2.5-3B-Instruct as the base model, respectively. The results show that Fed-GRPO consistently outperforms all other federated methods across different base models and tasks, demonstrating its generality and effectiveness in various settings. More results on different data heterogeneity and analysis on learning dynamics are provided in Appendix~\ref{sec:more_exp_results}.

\begin{table}[h]

\begin{minipage}[t]{0.65\linewidth}
\caption{Results on five math reasoning benchmarks using Qwen3-4B-Instruct as the base model.}
\label{tab:main_results_qwen3_4b}
\vspace{-0.3cm}
\begin{center}
\resizebox{\linewidth}{!}{
\begin{tabular}{lccccc|c}
\toprule
\textbf{Method} & \textbf{GSM8K} & \textbf{MATH-500} & \textbf{AIME24} & \textbf{AIME25} & \textbf{AMC} & \textbf{Avg.} \\
\midrule
Qwen3-4B-Instruct & 77.56 & 84.55 & 32.7 & 29.79 & 77.5 & 60.42  \\
\midrule
Centralized & 93.76 & 86.90 & 36.67 & 32.50 & 81.95 & 66.36 \\
\midrule
Local-Only & 93.45 & 86.18 & 33.75 & 30.04 & 78.55 & 64.39 \\
FedAvg-SW & 93.66 & 86.76 & 35.62 & 29.79 & 82.50 & 65.66 \\
FedAvg-UW & 93.73 & 86.73 & 34.90 & 30.52 & 81.02 & 65.38 \\
SparsyFed & 93.48 & 86.53 & 35.02 & 31.22 & 81.13 & 65.47 \\
SparseLoCo & 93.28 & 86.32 & 34.93 & 31.03 & 81.07 & 65.33 \\
FGRPO & 93.77 & 86.75 & 35.23 & 31.78 & 81.62 & 65.83 \\
\cmidrule{1-7}
Fed-GRPO (Ours) & \textbf{93.93} & \textbf{86.85} & \textbf{35.94} & \textbf{32.29} & \textbf{82.73} & \textbf{66.35} \\
\bottomrule
\end{tabular}
}
\end{center}
\end{minipage}
\hfill
\begin{minipage}[t]{0.32\linewidth}
\caption{Results on coding tasks using Qwen2.5-3B-Instruct as the base model.}
\label{tab:main_results_coding}
\vspace{-0.3cm}
\begin{center}
\resizebox{\linewidth}{!}{
\begin{tabular}{lcc}
\toprule
\textbf{Method} & \textbf{GSM8K} & \textbf{MATH-500} \\
\midrule
Qwen2.5-3B-Instruct & 57.40 & 60.41 \\
\midrule
Centralized & 62.80 & 62.85 \\
\midrule
Local-Only & 59.60 & 60.80 \\
FedAvg-SW & 60.70 & 61.63 \\
FedAvg-UW & 60.82 & 61.82 \\
SparsyFed & 60.66 & 61.23 \\
SparseLoCo & 60.03 & 60.98 \\
FGRPO & 61.42 & 61.94 \\
\cmidrule{1-3}
Fed-GRPO (Ours) & \textbf{62.23} & \textbf{62.44} \\
\bottomrule
\end{tabular}
}
\end{center}
\end{minipage}

\end{table}

}

\subsection{Ablation Studies}
\label{sec:ablation}

To isolate the contribution of each mechanism in Fed-GRPO, we conduct ablation studies \pengxin{on the math tasks using Qwen2.5-3B-Instruct as the base model} by starting from FedAvg-SW (i.e., Fed-GRPO with all three mechanisms disabled) and progressively adding each component. As shown in Table~\ref{tab:ablation}, each mechanism contributes complementarily. Starting from FedAvg-SW (37.10\%), \emph{signal-weighted aggregation} (SWA) lifts the average to 38.01\% ($+0.91$) with consistent gains across all five benchmarks, confirming that $\sigma_k$ better reflects update quality than dataset size. Adding \emph{global reward calibration} (GRC) further improves the average to 39.07\% ($+1.06$), with the largest gains on harder benchmarks (e.g., MATH-500: $58.73 \to 61.26$), matching GRC's design intent of steering clients toward globally lagging prompts. Finally, \emph{adaptive sparse communication} (ASC) at $B{=}1.0$ preserves accuracy (39.06\% vs.\ 39.07\%) while shrinking the per-client uplink from $6{,}794.2$\,MB to $645.5$\,MB---a $11\times$ reduction at zero accuracy cost. 
Overall, SWA and GRC drive the accuracy gains, while ASC contributes orthogonal communication savings.

\begin{table}[h]
\caption{Ablation studies on Fed-GRPO components. $+$SWA: add signal-weighted aggregation. $+$GRC: add global reward calibration. $+$ASC: add adaptive sparse communication (at $B{=}1.0$, lossless). Comm. (MB) refers to the average per-round per-client uplink payload.}
\label{tab:ablation}
\vspace{-0.3cm}
\begin{center}
\resizebox{\linewidth}{!}{
\begin{tabular}{lcccccc|c}
\toprule
\textbf{Configuration} & \textbf{GSM8K} & \textbf{MATH-500} & \textbf{AIME24} & \textbf{AIME25} & \textbf{AMC} & \textbf{Avg.} & \textbf{Comm. (MB)} \\
\midrule
FedAvg-SW (no component) & 79.55 & 56.84 & 6.56 & 3.02 & 39.53 & 37.10 & 6{,}794.2 \\
$+$ SWA (\S\ref{sec:signal_weighted}) & 81.24 & 58.73 & 7.02 & 3.05 & 40.03 & 38.01 & 6{,}794.2 \\
$+$ SWA $+$ GRC (\S\ref{sec:reward_calibration}) & 82.75 & 61.26 & 7.26 & 3.13 & 40.94 & 39.07 & 6{,}794.2 \\
$+$ SWA $+$ GRC $+$ ASC (Fed-GRPO) & 83.21 & 60.82 & 7.08 & 3.12 & 41.09 & 39.06 & 645.5 \\
\bottomrule
\end{tabular}
}
\end{center}
\end{table}

\section{Related Work}
\label{sec:related_work}

Several works also explore federated GRPO training but most from angles orthogonal to ours.
FedGRPO~\cite{zhu2026fedgrpo} guides privacy-preserving server updates using GRPO's "Group Relative" rewards among experts based on clients' domain knowledge. FedMOA~\cite{wang2026fedmoa} tackles multi-objective RL via hypergradient-based online weighting and task-aware server aggregation. \pengxin{Concurrently, FGRPO~\cite{chen2026fgrpo} introduces an adaptive aggregation mechanism that dynamically prioritizes effective learning trajectories based on each client's performance gain relative to its historical baseline.}
Differently, Fed-GRPO targets the standard federated GRPO reasoning with a shared accuracy-style reward and non-IID prompt distribution. It is the first to identify that the reward statistics $(\mu_k,\sigma_k)$ produced ``for free'' during GRPO advantage normalization carry rich information about each client's aggregation, training, and communication needs---turning them into a unified, zero-cost signal that drives all three mechanisms.

\section{Conclusion}

We present Fed-GRPO, a novel FL framework for training large language models with GRPO. Fed-GRPO introduces three key mechanisms that leverage GRPO's unique reward statistics to address the challenges of naïve Fedavg GRPO training: \emph{signal-weighted aggregation} that dynamically weights client updates based on their reward standard deviation, \emph{global reward calibration} that re-weights local objectives based on local-global reward gaps, and \emph{adaptive sparse communication} that allocates bandwidth based on update quality. Experiments on several mathematical reasoning tasks demonstrate that Fed-GRPO significantly outperforms federated baselines and approaches centralized performance while achieving up to $32\times$ lossless communication compression. This work opens up new possibilities for collaborative reasoning training across distributed data sources without sharing raw data.

\bibliography{ref}
\bibliographystyle{iclr2027_conference}

\clearpage
\appendix

\section{Fed-GRPO Algorithm}
\label{app_sec:algo}

Algorithm~\ref{alg:fedgrpo} provides a detailed summary of the Fed-GRPO method.

\begin{algorithm}[h]
\caption{Fed-GRPO}
\label{alg:fedgrpo}
\begin{algorithmic}[1]
\renewcommand{\algorithmicrequire}{\textbf{Input:}}
\renewcommand{\algorithmicensure}{\textbf{Output:}}
\REQUIRE Initial model $\theta^{(0)}$, local datasets $\{\mathcal{D}_k\}_{k=1}^K$, communication rounds $T$, local training steps $S$, target communication budget ratio $B$
\ENSURE Trained global model $\theta^{(T)}$

\STATE \textbf{Server initializes:} $\mu_{\text{global}} \gets \text{None}$, $\sigma_{\text{global}} \gets \text{None}$

\FOR{$t = 0, 1, \ldots, T-1$}
    \STATE \textbf{Server} broadcasts global model $\theta^{(t)}$ and statistics $(\mu_{\text{global}}, \sigma_{\text{global}})$ to all clients
    \FOR{each client $k \in \{1, \ldots, K\}$ \textbf{in parallel}}
        \STATE $\theta_k \gets \theta^{(t)}$ \hfill $\triangleright$ \textit{Initialize local model}
        \FOR{$s = 1, \ldots, S$}
            \STATE Sample a batch of prompts from $\mathcal{D}_k$; generate $G_{\text{rol}}$ rollouts per prompt
            \STATE Compute rewards and advantages $\{\hat{A}_i\}$ for the group (Eq.~(\ref{eq:grpo_advantage}))
            \IF{$\mu_{\text{global}} \neq \text{None}$}
                \STATE Compute calibration weights $\{\omega_q\}$ based on local--global reward gap (Eq.~(\ref{eq:prompt_weight}))
            \ELSE
                \STATE $\omega_q \gets 1$ for all prompts
            \ENDIF
            \STATE Update $\theta_k$ via the calibrated GRPO loss $\mathcal{L}_k$ (Eq.~(\ref{eq:calibrated_loss}))
        \ENDFOR
        \STATE Compute local reward mean $\mu_k$ and standard deviation $\sigma_k$ over the round
        \STATE Compute local model update $\Delta_k \gets \theta_k - \theta^{(t)}$
        \STATE Determine adaptive budget ratio $B_k$ (Eq.~(\ref{eq:adaptive_budget})) \hfill $\triangleright$ \textit{Allocate bandwidth}
        \STATE Let $\tilde{\Delta}_k$ be the lossless sparse encoding of $\Delta_k$
        \IF{$B_k < 1.0$}
            \STATE Further compress $\tilde{\Delta}_k$: retain top modules by efficiency $E_m$ (Eq.~(\ref{eq:efficiency})) until we only retain a fraction $B_k$ of the changed parameters
        \ENDIF
        \STATE \textbf{Client} sends compressed update $\tilde{\Delta}_k$ and reward statistics $(\mu_k, \sigma_k)$ to server
    \ENDFOR
    \STATE \textbf{Server} computes signal-weighted aggregation weights $\{w_k\}$ (Eq.~(\ref{eq:signal_weighted}))
    \STATE \textbf{Server} decompresses each $\tilde{\Delta}_k$ back to a dense delta $\Delta_k$
    \STATE \textbf{Server} applies aggregated updates: $\theta^{(t+1)} = \theta^{(t)} + \sum_{k=1}^{K} w_k \, \Delta_k$
    \STATE \textbf{Server} updates global statistics $\mu_{\text{global}}$ and $\sigma_{\text{global}}$ (Eq.~(\ref{eq:global_stats}))
\ENDFOR

\RETURN $\theta^{(T)}$
\end{algorithmic}
\end{algorithm}

\section{Proof of Theorem~\ref{thm:convergence}}
\label{app:proof_convergence}

\begin{proof}
Let $\theta^{(t)}$ be the global model at the beginning of round $t$, and let $\theta_k^{(t,s)}$ be the local model of client $k$ after $s$ local steps within round $t$, with $\theta_k^{(t,0)} = \theta^{(t)}$. Within round $t$, the one-round update of the global model can be described as the following two sub-steps:
\begin{equation*}
    \theta^{(t)} \xrightarrow[\text{local update of each client}]{}  \big\{\theta_k^{(t,S)}\big\}_{k=1}^K \xrightarrow[\text{server aggregation}]{}  \theta^{(t+1)}.
\end{equation*}
For convenience, we denote the intermediate quantities as follows:
\begin{align*}
    & \theta_k^{(t,s+1)} = \theta_k^{(t,s)} - \eta g_k^{(t,s)}, && s = 0,1,\ldots,S-1, \\
    & \Delta_k^{(t)} = \theta_k^{(t,S)} - \theta^{(t)} = -\eta \sum_{s=0}^{S-1} g_k^{(t,s)}, \\
    & U^{(t)} = \theta^{(t)} + \sum_{k=1}^K w_k^{(t)} \Delta_k^{(t)}, \\
    & \theta^{(t+1)} = U^{(t)} + \sum_{k=1}^K w_k^{(t)} \big(\tilde{\Delta}_k^{(t)} - \Delta_k^{(t)}\big),
\end{align*}
where $g_k^{(t,s)}$ is the stochastic gradient of the round-$t$ calibrated local objective $\mathcal{L}_k^{(t)}$ on a mini-batch $\xi_{k,s}^{(t)}$, with $\mathbb{E}\big[g_k^{(t,s)}\big] = \nabla \mathcal{L}_k^{(t)}\big(\theta_k^{(t,s)}\big)$ and $\mathbb{E}\big\|g_k^{(t,s)}\big\|_F^2 \le G^2$ by Assumption~\ref{assumption:bounded_gradient}; $\tilde{\Delta}_k^{(t)} = \mathcal{C}_k(\Delta_k^{(t)})$ is the compressed update produced by the adaptive sparse communication operator in Section~\ref{sec:compression}; and $w_k^{(t)} = \sigma_k^{(t)}/\sum_j \sigma_j^{(t)}$ is the signal-weighted aggregation weight in Eq.~(\ref{eq:signal_weighted}). Here, $U^{(t)}$ represents the immediate result obtained after each client transmits its uncompressed local update, and $\theta^{(t+1)}$ represents the parameter obtained after the actual compressed-communication step.

Since each $\mathcal{L}_k^{(t)}$ is $L$-smooth (Assumption~\ref{assumption:l-smooth}) and $\sum_k w_k^{(t)} = 1$, the global objective $\mathcal{L}^{(t)} = \sum_{k=1}^K w_k^{(t)}\mathcal{L}_k^{(t)}(\theta)$ is also $L$-smooth.
By the smoothness of $\mathcal{L}^{(t)}$,
\begin{equation}
    \mathcal{L}^{(t)}(U^{(t)}) \le \mathcal{L}^{(t)}(\theta^{(t)}) + \big\langle U^{(t)} - \theta^{(t)}, \nabla \mathcal{L}^{(t)}(\theta^{(t)})\big\rangle_F + \frac{L}{2}\big\|U^{(t)} - \theta^{(t)}\big\|_F^2.
    \label{eq:smooth_U}
\end{equation}
For the squared-norm term in Eq.~\eqref{eq:smooth_U}, since $U^{(t)} - \theta^{(t)} = -\eta \sum_k w_k^{(t)} \sum_s g_k^{(t,s)}$, by 
Cauchy-Schwarz inequality,
\begin{align*}
    \big\|U^{(t)} - \theta^{(t)}\big\|_F^2  
    = & \eta^2 \big\|\sum_k w_k^{(t)} \sum_s g_k^{(t,s)}\big\|_F^2
    \le \eta^2 \big(\sum_k (w_k^{(t)})^2 \big) \Big(\sum_k \big\|\sum_{s=0}^{S-1} g_k^{(t, s)}\big\|_F^2\Big) \\
    \le & \eta^2 \big(\sum_k w_k^{(t)} \big) \Big(\sum_k S \sum_{s=0}^{S-1} \|g_k^{(t, s)}\|_F^2\Big)
    \le \eta^2 S \sum_{k=1}^K \sum_{s=0}^{S-1} \big\|g_k^{(t,s)}\big\|_F^2,
\end{align*}
where we use the fact that $w_k^{(t)} \le 1$ for the last second step, and $\sum_k w_k^{(t)} = 1$ for the last step.
Taking expectation,
\begin{equation}
    \mathbb{E}\big\|U^{(t)} - \theta^{(t)}\big\|_F^2 \le \eta^2 S \sum_{k=1}^K \sum_{s=0}^{S-1} \mathbb{E}\big\|g_k^{(t,s)}\big\|_F^2 \le K \eta^2 S^2 G^2.
    \label{eq:U_W_norm}
\end{equation}
For the inner-product term in Eq.~\eqref{eq:smooth_U}, 
\begin{align*}
\big\langle U^{(t)} - \theta^{(t)}, \nabla \mathcal{L}^{(t)}(\theta^{(t)})\big\rangle_F
= -\eta \big\langle \sum_k w_k^{(t)} \sum_s g_k^{(t,s)}, \nabla \mathcal{L}^{(t)}(\theta^{(t)})\big\rangle_F,
\end{align*}
taking expectation over the stochastic gradients in current round (Denote with $\mathbb{E}_t$), and using $\mathbb{E}[X Y]=\mathbb{E}[X] \mathbb{E}[Y]+\operatorname{Cov}(X, Y)$, we get 
\begin{align*}
& \mathbb{E}_t\big[\big\langle \sum_k w_k^{(t)} \sum_s g_k^{(t,s)}, \nabla \mathcal{L}^{(t)}(\theta^{(t)})\big\rangle_F\big]\\
= & \big\langle \sum_k \mathbb{E}_t[w_k^{(t)}] \sum_s \mathbb{E}_t[g_k^{(t,s)}], \nabla \mathcal{L}^{(t)}(\theta^{(t)})\big\rangle_F
+ \sum_{k} \sum_{s} \big\langle \operatorname{Cov}_t\big(w_k^{(t)}, g_k^{(t,s)}\big), \nabla \mathcal{L}^{(t)}(\theta^{(t)})\big \rangle\\
= & \sum_k \mathbb{E}_t[w_k^{(t)}] \sum_s  \big\langle \nabla \mathcal{L}_k^{(t)}(\theta_k^{(t,s)}), \nabla \mathcal{L}^{(t)}(\theta^{(t)})\big\rangle_F
+ \sum_{k} \sum_{s} \big\langle \operatorname{Cov}_t\big(w_k^{(t)}, g_k^{(t,s)}\big), \nabla \mathcal{L}^{(t)}(\theta^{(t)})\big \rangle
\end{align*}
If we assume $\big\|\operatorname{Cov}_t\big(w_k^{(t)}, g_k^{(t,s)}\big)\big\|_F \leq \delta_t$ for any $k,s$, then 
\begin{align*}
& \sum_{k} \sum_{s} \big\langle \operatorname{Cov}_t\big(w_k^{(t)}, g_k^{(t,s)}\big), \nabla \mathcal{L}^{(t)}(\theta^{(t)})\big \rangle\\
\geq & -\sum_{k} \sum_{s} \big\|\operatorname{Cov}_t\big(w_k^{(t)}, g_k^{(t,s)}\big)\big\|_F \big\|\nabla \mathcal{L}^{(t)}(\theta^{(t)})\big\|_F
\geq -KS\delta_t \big\|\nabla \mathcal{L}^{(t)}(\theta^{(t)})\big\|_F
\end{align*}
Combing the above facts,
\begin{align}
    \mathbb{E}\big[\big\langle U^{(t)} - \theta^{(t)}, \nabla \mathcal{L}^{(t)}(\theta^{(t)})\big\rangle_F\big]
\leq -\eta \mathbb{E}\bigg[ \sum_{k=1}^K w_k^{(t)} \sum_{s=0}^{S-1}
    \big\langle \nabla \mathcal{L}^{(t)}(\theta^{(t)}), \nabla \mathcal{L}_k^{(t)}(\theta_k^{(t,s)})\big\rangle_F\bigg]\notag\\
+ \eta KS\delta_t \mathbb{E}\big[\big\|\nabla \mathcal{L}^{(t)}(\theta^{(t)})\big\|_F\big].
    \label{eq:inner_decompose}
\end{align}
We split the gradient as $\nabla \mathcal{L}_k^{(t)}(\theta_k^{(t,s)}) = \nabla \mathcal{L}_k^{(t)}(\theta^{(t)}) + \big(\nabla \mathcal{L}_k^{(t)}(\theta_k^{(t,s)}) - \nabla \mathcal{L}_k^{(t)}(\theta^{(t)})\big)$. For the first part, summing over $k$ with weights $w_k^{(t)}$,
\begin{equation}
    \sum_{k=1}^K w_k^{(t)} \big\langle \nabla \mathcal{L}^{(t)}(\theta^{(t)}), \nabla \mathcal{L}_k^{(t)}(\theta^{(t)}) \big\rangle_F
    = \big\| \nabla \mathcal{L}^{(t)}(\theta^{(t)}) \big\|_F^2.
    \label{eq:first_part}
\end{equation}
Summing $s = 0, \ldots, S-1$, this contributes $S\|\nabla \mathcal{L}^{(t)}(\theta^{(t)})\|_F^2$. For the second (drift) part, by Young's inequality with parameter $\nu = 1$ and the $L$-smoothness of $\mathcal{L}_k^{(t)}$,
\begin{equation}
    \big\langle \nabla \mathcal{L}^{(t)}(\theta^{(t)}), \nabla \mathcal{L}_k^{(t)}(\theta_k^{(t,s)}) - \nabla \mathcal{L}_k^{(t)}(\theta^{(t)}) \big\rangle_F
    \ge -\tfrac{1}{2}\big\| \nabla \mathcal{L}^{(t)}(\theta^{(t)})\big\|_F^2 - \tfrac{ L^2}{2} \big\|\theta_k^{(t,s)} - \theta^{(t)}\big\|_F^2.
    \label{eq:second_part}
\end{equation}
Since $w_k^{(t)} \leq 1$ and $\sum_{k=1}^K w_k^{(t)} =1$, we know
\begin{align*}
\sum_{k=1}^K w_k^{(t)} \big\|\theta_k^{(t,s)} - \theta^{(t)}\big\|_F^2
\leq \sum_{k=1}^K \big\|\theta_k^{(t,s)} - \theta^{(t)}\big\|_F^2, \quad
\sum_{k=1}^K w_k^{(t)} \big\| \nabla \mathcal{L}^{(t)}(\theta^{(t)})\big\|_F^2 = \big\| \nabla \mathcal{L}^{(t)}(\theta^{(t)})\big\|_F^2.
\end{align*}
Multiplying by $w_k^{(t)}$ and summing over $k$, then summing over $s$ to both sides of Eq.~\eqref{eq:second_part}, we get
\begin{multline*}
    \sum_{k=1}^K w_k^{(t)}\!\sum_{s=0}^{S-1}\!\big\langle \nabla \mathcal{L}^{(t)}(\theta^{(t)}), \nabla \mathcal{L}_k^{(t)}(\theta_k^{(t,s)}) - \nabla \mathcal{L}_k^{(t)}(\theta^{(t)})\big\rangle_F \\
    \ge  -\tfrac{S}{2}\big\|\nabla \mathcal{L}^{(t)}(\theta^{(t)})\big\|_F^2 - \tfrac{L^2}{2}\!\sum_{k=1}^K\!\sum_{s=0}^{S-1}\!\big\|\theta_k^{(t,s)} - \theta^{(t)}\big\|_F^2.
\end{multline*}
Taking expectation and using $\mathbb{E}\|\theta_k^{(t,s)} - \theta^{(t)}\|_F^2 \le \eta^2 s^2 G^2$ (per-client analogue of Eq.~\eqref{eq:U_W_norm}), together with $\sum_{s=0}^{S-1} s^2 \le S^3/3$, we get
\begin{equation}
    \mathbb{E}\bigg[\sum_{k=1}^K\sum_{s=0}^{S-1}\big\|\theta_k^{(t,s)} - \theta^{(t)}\big\|_F^2\bigg]
    \le \frac{K\eta^2 S^3 G^2}{3}.
\end{equation}
Then,
\begin{multline*}
    \mathbb{E}\Big[\sum_{k=1}^K w_k^{(t)}\!\sum_{s=0}^{S-1}\!\big\langle \nabla \mathcal{L}^{(t)}(\theta^{(t)}), \nabla \mathcal{L}_k^{(t)}(\theta_k^{(t,s)}) - \nabla \mathcal{L}_k^{(t)}(\theta^{(t)})\big\rangle_F\Big] \\
    \ge  -\tfrac{S}{2}\mathbb{E}\big[\big\|\nabla \mathcal{L}^{(t)}(\theta^{(t)})\big\|_F^2\big] - \frac{K\eta^2L^2 S^3 G^2}{6}.
\end{multline*}
Combining with Eq.~\eqref{eq:first_part} and substituting back into Eq.~\eqref{eq:inner_decompose}, we get
\begin{align}
    & \mathbb{E}\big[\big\langle U^{(t)} - \theta^{(t)}, \nabla \mathcal{L}^{(t)}(\theta^{(t)})\big\rangle_F\big]\notag\\
    \le & -\frac{\eta S}{2} \mathbb{E}\big[\big\|\nabla \mathcal{L}^{(t)}(\theta^{(t)})\big\|_F^2\big] + \frac{K \eta^3 L^2 S^3 G^2}{6} + \eta KS\delta_t \mathbb{E}\big[\big\|\nabla \mathcal{L}^{(t)}(\theta^{(t)})\big\|_F\big]\notag\\
    \leq & -\frac{\eta S}{4} \mathbb{E}\big[\big\|\nabla \mathcal{L}^{(t)}(\theta^{(t)})\big\|_F^2\big] + \frac{K \eta^3 L^2 S^3 G^2}{6},
    \label{eq:U_inner}
\end{align}
where we assume $\delta_t \leq \mathbb{E}\big[\big\|\nabla \mathcal{L}^{(t)}(\theta^{(t)})\big\|_F\big]/(4K)$ to derive the last step.
Plugging Eqs.~(\ref{eq:U_W_norm}) and (\ref{eq:U_inner}) into Eq.~(\ref{eq:smooth_U}),
\begin{equation}
    \mathbb{E}\big[\mathcal{L}^{(t)}(U^{(t)})\big]
    \le \mathbb{E}\big[\mathcal{L}^{(t)}(\theta^{(t)})\big] - \frac{\eta S}{4} \mathbb{E}\big\|\nabla \mathcal{L}^{(t)}(\theta^{(t)})\big\|_F^2 + \frac{K \eta^3 L^2 S^3 G^2}{6} + \frac{K L \eta^2 S^2 G^2}{2}.
    \label{eq:U_decrease}
\end{equation}

By the smoothness of $\mathcal{L}^{(t)}$ at $U^{(t)}$,
\begin{equation}
    \mathcal{L}^{(t)}(\theta^{(t+1)}) \le \mathcal{L}^{(t)}(U^{(t)}) + \big\langle \theta^{(t+1)} - U^{(t)}, \nabla \mathcal{L}^{(t)}(U^{(t)})\big\rangle_F + \frac{L}{2}\big\|\theta^{(t+1)} - U^{(t)}\big\|_F^2.
    \label{eq:smooth_V}
\end{equation}

To continue the proof, we first justify Assumption~\ref{assumption:bounded_compression} for our efficiency-based module selection. By construction (Section~\ref{sec:compression}), the operator $\mathcal{C}_k$ drops modules in increasing order of efficiency $E_m = \|\Delta_m\|_F^2/\|\Delta_m\|_0$ until only a fraction $B_k$ of the changed parameters is retained. Let $\bar{E} := \|\Delta\|_F^2/\|\Delta\|_0 = \sum_m E_m \|\Delta_m\|_0 / \sum_m \|\Delta_m\|_0$ denote the overall efficiency averaged over all modules with weights $\|\Delta_m\|_0$. Since dropped modules carry the smallest $E_m$ values, their weighted average $E_m$ (with the same weights $\|\Delta_m\|_0$) is no larger than $\bar{E}$, so the total dropped energy satisfies
\begin{equation}
    \big\|\mathcal{C}_k(\Delta) - \Delta\big\|_F^2 = \sum_{m \in \text{dropped}} E_m \cdot \|\Delta_m\|_0 \le \bar{E} \cdot (1-B_k)\|\Delta\|_0 = (1-B_k)\|\Delta\|_F^2,
    \label{eq:asc_just}
\end{equation}
which validates Assumption~\ref{assumption:bounded_compression}. Now, since $\theta^{(t+1)} - U^{(t)} = \sum_k w_k^{(t)} (\tilde{\Delta}_k^{(t)} - \Delta_k^{(t)})$, applying Jensen's inequality together with $w_k^{(t)} \le 1$, we get
\begin{align*}
\mathbb{E}\big\|\theta^{(t+1)} - U^{(t)}\big\|_F^2
= & \mathbb{E}\Big[\Big\|\sum_{k=1}^K w_k^{(t)}(\tilde{\Delta}_k^{(t)} - \Delta_k^{(t)})\Big\|_F^2\Big]
\leq \mathbb{E}\Big[\sum_{k=1}^K w_k^{(t)}\|\tilde{\Delta}_k^{(t)} - \Delta_k^{(t)}\|_F^2\Big]\\
\leq & \mathbb{E}\Big[\sum_{k=1}^K \|\tilde{\Delta}_k^{(t)} - \Delta_k^{(t)}\|_F^2\Big]
\leq \mathbb{E}\Big[\sum_{k=1}^K (1-B_k^{(t)})\|\Delta_k^{(t)}\|_F^2\Big]
\end{align*}
where we use Eq.~\eqref{eq:asc_just} in the last step. Applying $\mathbb{E}[X Y]=\mathbb{E}[X] \mathbb{E}[Y]+\operatorname{Cov}(X, Y)$, this further imples
\begin{equation}
\mathbb{E}\big\|\theta^{(t+1)} - U^{(t)}\big\|_F^2
=\sum_{k=1}^K \mathbb{E}[(1-B_k^{(t)})] \mathbb{E}[\|\Delta_k^{(t)}\|_F^2] - \sum_{k=1}^K \operatorname{Cov}(B_k^{(t)},\|\Delta_k^{(t)}\|_F^2)
\le \frac{1}{2}K(1-\bar{B}^{(t)}) \eta^2 S^2 G^2,
    \label{eq:V_U_norm}
\end{equation}
where $\bar{B}^{(t)} := \tfrac{1}{K}\sum_{k=1}^K B_k^{(t)}$ is the per-client average budget ratio at round $t$. Here we used $\mathbb{E}\big\|\Delta_k^{(t)}\big\|_F^2 \le \eta^2 S^2 G^2$ (analogue to Eq.~\eqref{eq:U_W_norm}) together with the assumption that $\operatorname{Cov}(B_k^{(t)},\|\Delta_k^{(t)}\|_F^2) \geq -\frac{1}{2}(1-\bar{B}^{(t)}) \eta^2 S^2 G^2$. For the inner-product term in Eq.~(\ref{eq:smooth_V}), we first note
\begin{equation}
    \big\|\nabla \mathcal{L}_k^{(t)}(\theta)\big\|_F^2 = \big\|\mathbb{E}\nabla \mathcal{L}_k^{(t)}(\theta;\xi_{k,s}^{(t)})\big\|_F^2 \le \mathbb{E}\big\|\nabla \mathcal{L}_k^{(t)}(\theta;\xi_{k,s}^{(t)})\big\|_F^2 \le G^2
    \label{eq:bounded_full_grad}
\end{equation}
Hence by Jensen's inequality, 
\begin{equation}
    \big\|\nabla \mathcal{L}^{(t)}(U^{(t)})\big\|_F^2 = \Big\|\sum_{k=1}^K w_k^{(t)}\nabla \mathcal{L}_k^{(t)}(U^{(t)})\Big\|_F^2 \le \sum_{k=1}^K w_k^{(t)}\big\|\nabla \mathcal{L}_k^{(t)}(U^{(t)})\big\|_F^2 \le G^2.
    \label{eq:bounded_global_grad}
\end{equation}
Now, by Young's inequality with parameter $\eta_0 > 0$ chosen below,
\begin{align}
    & \mathbb{E}\big[\big\langle \theta^{(t+1)} - U^{(t)}, \nabla \mathcal{L}^{(t)}(U^{(t)})\big\rangle_F\big] \notag\\
    \le \ & \frac{\eta_0 G^2}{2} + \frac{1}{2\eta_0} \mathbb{E}\big\|\theta^{(t+1)} - U^{(t)}\big\|_F^2 \notag\\
    \le \ & \frac{\eta_0 G^2}{2} + \frac{K(1-\bar{B}^{(t)}) \eta^2 S^2 G^2}{2\eta_0}.
    \label{eq:V_inner}
\end{align}
We choose $\eta_0 = \sqrt{K(1-\bar{B}^{(t)})} \eta S$ (when $\bar{B}^{(t)} < 1$), which is the AM--GM-optimal value that balances the two terms, making both equal to $\sqrt{K(1-\bar{B}^{(t)})}\eta S G^2/2$. (When $\bar{B}^{(t)} = 1$, we have $\theta^{(t+1)} = U^{(t)}$ a.s.\ and Eq.~(\ref{eq:V_inner}) trivially holds.) Plugging Eqs.~(\ref{eq:V_U_norm}) and (\ref{eq:V_inner}) into Eq.~(\ref{eq:smooth_V}),
\begin{equation}
    \mathbb{E}\big[\mathcal{L}^{(t)}(\theta^{(t+1)})\big]
    \le \mathbb{E}\big[\mathcal{L}^{(t)}(U^{(t)})\big] + \sqrt{K(1-\bar{B}^{(t)})}  \eta S G^2 + \frac{L K(1-\bar{B}^{(t)}) \eta^2 S^2 G^2}{4}.
    \label{eq:V_decrease}
\end{equation}

Combining Eq.~(\ref{eq:U_decrease}) and Eq.~(\ref{eq:V_decrease}), and rearranging,
\begin{equation}
\begin{aligned}
    \frac{\eta S}{4} \mathbb{E}\big\|\nabla \mathcal{L}^{(t)}(\theta^{(t)})\big\|_F^2
    \le \ & \mathbb{E}\big[\mathcal{L}^{(t)}(\theta^{(t)})\big] - \mathbb{E}\big[\mathcal{L}^{(t)}(\theta^{(t+1)})\big] \\
    & + \underbrace{\sqrt{K(1-\bar{B}^{(t)})}  \eta S G^2}_{\text{leading-order residual}} \\
    & + \underbrace{\frac{K \eta^3 L^2 S^3 G^2}{6} + \frac{K L \eta^2 S^2 G^2 (2-\bar{B}^{(t)})}{4}}_{=: \Phi(\eta)\text{(higher-order residuals)}}.
    \label{eq:per_round}
\end{aligned}
\end{equation}
There exists a constant $M = M(L, G, S, K, \eta_{\max}) > 0$ such that $\Phi(\eta) \le (M\eta) \cdot (\eta S/4) = M\eta^2 S/4$ for all $\eta \in (0, \eta_{\max}]$, so $\Phi(\eta)/(\eta S/4) \le M\eta$. Concretely, since $\Phi(\eta)/(\eta^2 S/4) = \tfrac{2K \eta L^2 S^2 G^2}{3} + KLSG^2(2-\bar B^{(t)})$, we can take $M = \tfrac{2K \eta_{\max} L^2 S^2 G^2}{3} + 2KLSG^2$. 

The descent inequality~(\ref{eq:per_round}) involves $\mathcal{L}^{(t)}$ on both sides. To telescope across rounds, we connect $\mathcal{L}^{(t)}(\theta^{(t+1)})$ on the LHS of round $t$ to $\mathcal{L}^{(t+1)}(\theta^{(t+1)})$ which appears on the RHS of round $t+1$. By the definition of $\Delta^{(t)}_{\mathcal{L}}$ in Assumption~\ref{assumption:drift},
\begin{equation}
    \mathcal{L}^{(t)}(\theta^{(t+1)}) \ge \mathcal{L}^{(t+1)}(\theta^{(t+1)}) - \Delta^{(t)}_{\mathcal{L}}.
    \label{eq:drift_link}
\end{equation}
Substituting Eq.~(\ref{eq:drift_link}) into Eq.~(\ref{eq:per_round}), 
\begin{equation}
    \frac{\eta S}{4} \mathbb{E}\big\|\nabla \mathcal{L}^{(t)}(\theta^{(t)})\big\|_F^2
    \le \mathbb{E}\big[\mathcal{L}^{(t)}(\theta^{(t)})\big] - \mathbb{E}\big[\mathcal{L}^{(t+1)}(\theta^{(t+1)})\big] + \mathbb{E}[\Delta^{(t)}_{\mathcal{L}}]
    + \sqrt{K(1-\bar{B}^{(t)})} \eta S G^2 + \Phi(\eta).
    \label{eq:per_round_drift}
\end{equation}

Before telescoping, we fix the boundary term at $t=T$. Define $\mathcal{L}^{(T)}$ using any uniformly bounded extension of the round-$T$ statistics (e.g., reusing the round-$(T{-}1)$ weights and calibration) so that $\mathcal{L}^{(T)}(\theta^{(T)}) \ge \inf_{0\le t\le T,\,\theta}\mathcal{L}^{(t)}(\theta)$; the specific choice of extension does not affect the bound, since the only round-$T$ quantity entering the telescoped sum is $\rho^{(T-1)}$, which is absorbed by Assumption~\ref{assumption:drift}. With this convention, $D \ge \mathcal{L}^{(0)}(\theta^{(0)}) - \inf_{0\le t\le T,\,\theta}\mathcal{L}^{(t)}(\theta) \ge \mathbb{E}[\mathcal{L}^{(0)}(\theta^{(0)})] - \mathbb{E}[\mathcal{L}^{(T)}(\theta^{(T)})]$.

Summing Eq.~(\ref{eq:per_round_drift}) over $t = 0, 1, \ldots, T-1$, and using the bound on $D$ above together with $\sum_t \mathbb{E}[\Delta^{(t)}_{\mathcal{L}}] \le RT$ from Assumption~\ref{assumption:drift} and Jensen's inequality $\frac{1}{T}\sum_t \sqrt{1-\bar{B}^{(t)}} \le \sqrt{1-\bar{B}}$ over rounds (where $\bar{B} = \frac{1}{T}\sum_t \bar{B}^{(t)}$ is the budget averaged over both clients and rounds),
\begin{equation}
    \frac{\eta S}{4}\sum_{t=0}^{T-1}\mathbb{E}\big\|\nabla \mathcal{L}^{(t)}(\theta^{(t)})\big\|_F^2
    \le D + RT + T \cdot \bigg[\sqrt{K(1-\bar{B})} \eta S G^2 + \Phi(\eta)\bigg].
    \label{eq:tele_sum}
\end{equation}
Dividing both sides by $\eta S T/4$ (and using $\Phi(\eta)/(\eta S/4) \le M\eta$),
\begin{equation}
    \frac{1}{T}\sum_{t=0}^{T-1}\mathbb{E}\big\|\nabla \mathcal{L}^{(t)}(\theta^{(t)})\big\|_F^2
    \le \frac{4D}{\eta S T} + \frac{4R}{\eta S} + 4\sqrt{K(1-\bar{B})} G^2 + M \eta.
    \label{eq:tele_avg}
\end{equation}
Setting $\eta = \sqrt{2(D/T+R)/(MS)}$ to optimally balance the $\eta$-dependent terms (i.e., $\tfrac{2(D/T+R)}{\eta S}$ against $M\eta$) via AM--GM,
\begin{equation}
    \frac{1}{T}\sum_{t=0}^{T-1}\mathbb{E}\big\|\nabla \mathcal{L}^{(t)}(\theta^{(t)})\big\|_F^2
    \le 2\sqrt{\frac{2 M (D/T+R)}{S}} + 4\sqrt{K(1-\bar{B})} G^2.
    \label{eq:final_pre}
\end{equation}
Rescaling the constant $M$ to absorb the factor $S/2$ (i.e., $M' = 2M/S$, and renaming $M' \to M$), and using $\sqrt{a+b} \le \sqrt{a}+\sqrt{b}$ for $a,b \ge 0$, we obtain the form stated in Eq.~(\ref{eq:thm1_main}):
\begin{equation}
    \frac{1}{T}\sum_{t=0}^{T-1}\mathbb{E}\big\|\nabla \mathcal{L}^{(t)}(\theta^{(t)})\big\|_F^2
    \le 2\sqrt{\frac{D M}{T}} + 2\sqrt{R M} + 4\sqrt{K(1-\bar{B})} G^2.
    \label{eq:final}
\end{equation}

\end{proof}

\section{Experimental Details}
\label{app_sec:full_exp_details}

\paragraph{Datasets.}
For mathematical reasoning tasks, we use the MATH dataset~\cite{hendrycks2021measuring}, which contains approximately 7{,}500 competition-level math problems spanning 5 difficulty levels. To construct a naturally non-IID federated partition, we split the training set across $K=5$ clients by difficulty level, so that each client specializes in problems of a single difficulty. For evaluation, we report accuracy on five widely-used math reasoning benchmarks: MATH-500~\cite{hendrycks2021measuring,lightman2023let}, GSM8K~\cite{cobbe2021training}, AIME 2024, AIME 2025, and AMC 2023. \pengxin{\pengxin{For coding tasks, we train on the MBPP dataset~\cite{austin2021program} ("train" + "validation") and evaluate on the held-out MBPP "test" split and the HumanEval dataset~\cite{chen2021evaluating}. To construct a non-IID federation setting, we partition the MBPP dataset by the reference-solution length, bucket it into 5 quantile bins, and partition the data across clients with a Dirichlet distribution ($\alpha=0.5$) over these buckets.}}

\paragraph{Baselines.}
We compare Fed-GRPO against several baselines that represent different points in the centralized--federated spectrum: i) \textbf{Base Model}: base model without any training, serving as base performance; ii) \textbf{Centralized}: standard GRPO training on the entire MATH dataset, serving as an upper bound for federated methods; iii) \textbf{Local-Only}: each client independently runs GRPO on its local data without any communication or aggregation; iv) \textbf{FedAvg-SW} / \textbf{FedAvg-UW}: standard FedAvg~\cite{mcmahan2017communication} with GRPO as the local training algorithm. SW uses sample-weighted aggregation ($w_k = n_k / N$), while UW uses uniform weights ($w_k = 1/K$); v) \textbf{SparsyFed}~\cite{guastella2025sparsyfed} and \textbf{SparseLoCo}~\cite{sarfi2025communication}: communication-effecient federated learning methods that apply fixed-ratio top-$k$ sparsification to the model updates.
\pengxin{vi) \textbf{FGRPO}~\cite{chen2026fgrpo}: incorporates an adaptive aggregation mechanism based on relative performance gain, characterizing each client’s improvement relative to its personalized historical baseline to dynamically prioritize effective learning trajectories regardless of local task difficulty.}

\paragraph{Training details.}
We use Qwen2.5-3B-Instruct~\cite{qwen2025qwen25} \pengxin{and Qwen3-4B-Instruct~\cite{yang2025qwen3}} as the base model for all methods. We use the AdamW~\cite{loshchilov2018decoupled} optimizer with learning rate $3 \times 10^{-6}$, $\beta_1=0.9$, $\beta_2=0.999$, weight decay $0$, a constant learning-rate schedule with warmup ratio $0.1$, and gradient clipping at $1.0$ for GRPO training. For rollout, the prompt batch size is 64 and we sample 8 responses for each prompt at sampling temperature $1.0$ and top-$p=1.0$. The clip ratio is set to $\epsilon=0.2$, KL coefficient $\beta_{\text{KL}}=0$, and we use BF16 mixed precision. The maximum prompt and completion lengths are capped at $1024$ and $2048$ tokens, respectively. The reward function is a binary accuracy reward that checks whether the extracted answer matches the ground truth \pengxin{or whether the code is right or not}. All federated methods run for $T=12$ communication rounds with $S=5$ local steps per round. Centralized and local-only baselines are trained for up to $310$ steps (matching the total number of gradient updates in the federated setting). For Fed-GRPO, we set the global reward calibration strength $\lambda = 0.5$; for adaptive sparse communication, we vary the budget $B$ to sweep the compression--accuracy trade-off, with minimum budget $B_{\min}=0.01$. All experiments are implemented on top of TRL~\cite{vonwerra2020trl} library and run on $8$ NVIDIA H800 GPUs.

\paragraph{Evaluation protocol.}
We save a model checkpoint at the end of each communication round for federated methods, and every $10$ training steps for centralized and local-only baselines, then evaluate each checkpoint on all five benchmarks. Following common practice in GRPO literature~\cite{yu2025dapo, guo2025deepseek}, we report the best accuracy achieved across checkpoints. For inference, we sample with temperature $1.0$, top-$p=0.7$, and a maximum generation length of $4096$ tokens; we generate $32$ samples per prompt and report avg@$32$ (the mean accuracy over the $32$ samples) for results stability.

{\color{black}
\section{More Experimental Results}
\label{sec:more_exp_results}

\subsection{Results on Different Data Heterogeneity}

We further evaluate the performance of Fed-GRPO under different data heterogeneity settings. Specifically, we partition the MATH dataset by "subject" (Algebra, Geometry, Number Theory, Combinatorics) using a Dirichlet distribution with $\alpha=0.5$ to create a non-IID federation across $K=20$ clients. The results are summarized in Table~\ref{tab:main_results_dirichlet_subject}. We observe that Fed-GRPO consistently outperforms all baselines across all benchmarks, demonstrating its robustness and effectiveness in handling heterogeneous data distributions.

\begin{table}[h]
\caption{Results on five math reasoning benchmarks using Qwen2.5-3B-Instruct as the base model under dirichlet partitioning ($\alpha$=0.5) by "subject" with K=20 clients.}
\label{tab:main_results_dirichlet_subject}
\vspace{-0.3cm}
\begin{center}
\begin{tabular}{lccccc|c}
\toprule
\textbf{Method} & \textbf{GSM8K} & \textbf{MATH-500} & \textbf{AIME24} & \textbf{AIME25} & \textbf{AMC} & \textbf{Avg.} \\

\midrule
Qwen2.5-3B-Instruct & 59.47 & 46.54 & 2.50 & 1.46 & 27.11 & 27.42 \\
\midrule
Centralized & 84.75 & 63.42 & 7.50 & 3.44 & 41.95 & 40.21 \\
\midrule
Local-Only &  79.60 & 53.09 & 4.43 & 2.26 & 33.57 & 34.59 \\
FedAvg-SW & 81.32 & 56.04 & 5.04 & 2.72 & 35.12 & 36.05 \\
FedAvg-UW & 80.79 & 56.23 & 4.97 & 2.66 & 34.86 & 35.90 \\
SparsyFed & 80.96 & 56.02 & 5.02 & 2.53 & 35.12 & 35.93 \\
SparseLoCo &  81.02 & 55.96 & 4.89 & 2.55 & 34.97 & 35.22\\
FGRPO & 81.43 & 56.77 & 5.48 & 2.76 & 38.10 & 36.91 \\
\cmidrule{1-7}
Fed-GRPO (Ours) & \textbf{82.02} & \textbf{58.77} & \textbf{6.18} & \textbf{3.02} & \textbf{39.11} & \textbf{38.02}
 \\
\bottomrule
\end{tabular}
\end{center}
\end{table}

\subsection{Dynamics of Signal-Weighted Aggregation}
\label{sec:deep_swa}

\begin{wrapfigure}{r}{0.52\textwidth}
    \centering
    \begin{subfigure}{0.25\textwidth}
        \centering
        \includegraphics[width=\textwidth]{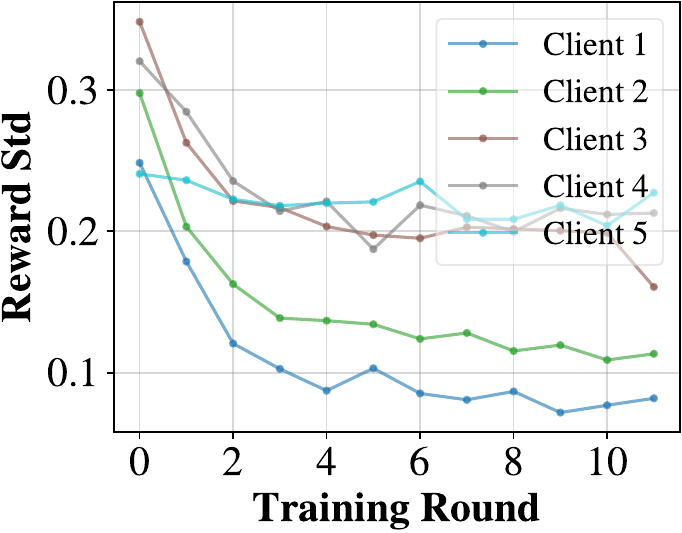}
        \caption{}
        \label{fig:swa_sigma}
    \end{subfigure}
    \begin{subfigure}{0.25\textwidth}
        \centering
        \includegraphics[width=\textwidth]{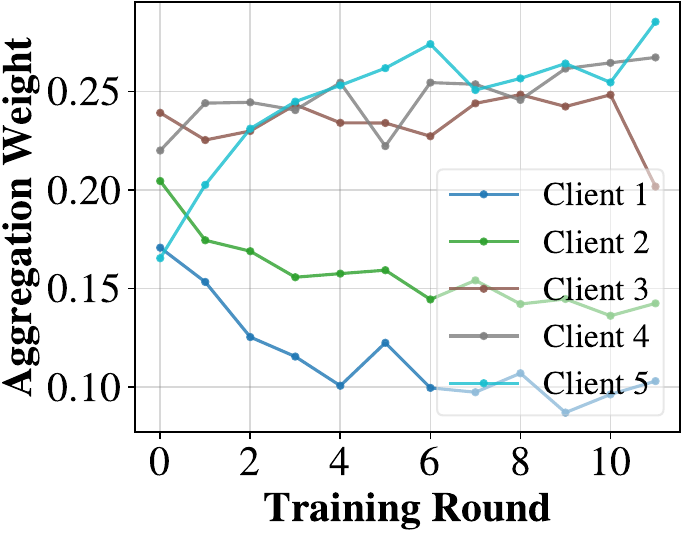}
        \caption{}
        \label{fig:swa_weights}
    \end{subfigure}
    \caption{Dynamics of (a) per-client reward standard deviation $\sigma_k$ and (b) aggregation weights $w_k$. The aggregation weights shift dynamically across rounds, concentrating on clients with stronger learning signals (higher $\sigma_k$) as training progresses.}
    \label{fig:swa_dynamics}
\end{wrapfigure}

To understand how signal-weighted aggregation adapts during training, we track two quantities across 12 communication rounds on the MATH dataset using Qwen2.5-3B-Instruct as the base model: the per-client reward standard deviation $\sigma_k$ (Figure~\ref{fig:swa_dynamics}(a)) and the resulting aggregation weights $w_k$ (Figure~\ref{fig:swa_dynamics}(b)).
As shown in Figure~(\ref{fig:swa_sigma}), $\sigma_k$ evolves differently for each client, reflecting their varying learning signals as training progresses. Some clients experience an increase in $\sigma_k$, indicating stronger learning signals, while others see a decrease in $\sigma_k$ as they master their local data. Figure~(\ref{fig:swa_weights}) shows that the aggregation weights $w_k$ adapt accordingly, shifting more weight toward clients with higher $\sigma_k$. This dynamic weighting allows Fed-GRPO to prioritize updates from clients that are currently contributing more to global improvement, effectively mitigating the aggregation mismatch issue in federated GRPO training. The fact that the weights change across rounds also suggests that the method is responsive to the evolving training dynamics, rather than relying on static heuristics. Overall, this analysis confirms that signal-weighted aggregation successfully captures and leverages the varying learning signals across clients to improve the quality of the global update. 

}

\end{document}